\newtheorem{fact}{Fact}
\newtheorem{definition}{Definition}
\newtheorem{theorem}{Theorem}
\newtheorem{lemma}{Lemma}
\title{Regret Analysis for Hierarchical Experts Bandit Problem}
\author {
    Qihan Guo,
    Siwei Wang,
    Jun Zhu
}
\begin{document}

\maketitle

\begin{abstract}
    We study an extension of standard bandit problem in which there are $R$ layers of experts. Multi-layered experts make selections layer by layer and only the experts in the last layer can play arms. The goal of the learning policy is to minimize the total regret in this hierarchical experts setting. We first analyze the case that total regret grows linearly with the number of layers. Then we focus on the case that all experts are playing Upper Confidence Bound (UCB) strategy and give several sub-linear upper bounds for different circumstances. Finally, we design some experiments to help the regret analysis for the general case of hierarchical UCB structure and show the practical significance of our theoretical results. This article gives many insights about reasonable hierarchical decision structure.
\end{abstract}

\section{Introduction}

This paper focuses on stochastic multi-armed bandit problem, a basic online decision problem. In a multi-armed bandit problem, an algorithm must choose from one of the $K$ possible actions in each of the $n$ consecutive rounds. At each round, the chosen action brings some observable payoff following an unknown prior distribution. The goal of the algorithm is to maximize the total payoff (i.e., the sum of the payoffs of the chosen actions in each round) by choosing arms properly based on previous observations. The performance of an algorithm in bandit problem is typically measured in terms of \textit{regret}, which is defined as the difference between the expected obtained payoff of the algorithm and the obtained payoff of an optimal strategy, i.e., always selects the action with the highest expected payoff over the $n$ rounds.

Bandit problems have many application scenarios. For instance, in clinical trials \cite{clinical}, one may have several possible treatments for a certain class of individuals but does not know which one is the best. In terms of bandit problem, these treatments are possible actions and the performance of patients is observable payoff. One needs an algorithm to choose treatments at different time steps to minimize the sacrifice of patients. Another instance is ad placement problem. Ad placement \cite{sum} is the problem of deciding what advertisement to display on the web page delivered to the visitors with particular feature. Similarly, every advertisement is an available action and the number of user clicks is the payoff. The goal of the ad placement algorithm is to maximize the number of total user clicks.

In real applications, sometimes the system follows a hierarchical decision structure. For instance, when a company wants to invest in stocks, it will try to follow one of the investment institutions. These investment institutions do not make decisions directly, and they will follow some of their respective employees. Finally, groups of employees will choose specific stocks directly. All these form a hierarchical decision structure, where the investment company can only decide the selected investment institutions and only employees can directly choose stocks.

Another instance is game tree search. Game tree can be used to model many non-deterministic and imperfect information games such as backgammon \cite{backgamon}, poker \cite{poker} and Scrabble \cite{Scrabble}. Its structure is a tree, where each node of the tree represents a state in the game, and the children of one node represent the next feasible states of the node. The result of each game is not known until the end, i.e., one receives a reward only after a path is selected from the root node to a leaf node. The goal of a game tree search algorithm is to make the winning rate of the game as high as possible by selecting the path properly. Because every node in the tree can only choose from its children, a game tree search algorithm is a hierarchical decision algorithm.

Therefore, in this paper, we consider an expansion of standard bandit problem, which is called \textit{hierarchical experts bandit problem}. In our setting, the controller cannot directly choose the available actions at each round, but only an ``expert'' in the first layer. Similarly, this expert cannot directly choose the available actions as well, he needs to choose an expert in the second layer. In this setting, only the experts in the last layer can directly choose actions. Besides, every expert can only see the selected times and accumulated rewards of the experts in the next layer, and cannot see the situation after the next layer. See Section \ref{sc:pre} for the formal discription.

In the investment company example mentioned above, we can regard investment institutions as the first layer of experts, employees as the second layer of experts, and stocks as the actions. Then it can be modeled as a hierarchical experts bandit problem with two layers of experts. Similarly, every layer in a game tree forms one layer of experts and the game tree can also be modeled as a hierarchical experts bandit problem. 

In hierarchical experts setting, a straightforward analysis is to regard all the experts as special actions. Then the regret in each layer depends on the used algorithms, and the total regret can be bounded by the sum of regrets from each layer. This leads to a regret bound which increases linearly with the number of layers. In our opinion, this bound is not tight in many cases. Therefore, our main goal is improving this bound under some reasonable assumptions about the hierarchical experts structure. In this paper, we will analyze that under what conditions the total regret grows linearly with the number of layers, and under what conditions the total regret can be bounded by a sub-linear upper bound. From these analysis, we can get some inspiration about what a reasonable hierarchical decision structure should be like.

The structure of this paper is as follows. Section \ref{sc:rel} describes the related work of this paper and the comparison between these work and ours. Section \ref{sc:pre} gives the formal definition of the problem we study and some basic facts which are useful in following sections. Section \ref{sc:lb} describes a circumstance that total regret increases linearly with the number of layers, implying that we need strong assumptions about the experts in hierarchical structure to restrict the total regret. Section \ref{sc:ub} gives some circumstances that total regret grows sub-linearly with the number of layers, which is the main theoretical result of this paper. Section \ref{sc:exp} describes some experiments which demonstrates the regret analysis in Section \ref{sc:ub}, and shows the practical significance of them in more general problem instances. Section \ref{sc:con} gives the conclusion of this paper and lists some of our future research interests.

\section{Related Work} \label{sc:rel}

The multi-armed bandit model was first introduced by \cite{rob}. After that, \cite{lai} proved the lower bound of regret of algorithms in stochastic multi-armed bandit problem is $\Omega(\log n)$. Then, \cite{auer} introduced the algorithm Upper Confidence Bound (UCB) and showed that the optimal regret bound of order $O(\log n)$ can be achieved uniformly over time for the stochastic bandit problem. Some improvement of UCB such as KL-UCB \cite{kl-ucb} further reduced the distance between the upper and lower bound of regret by using KL-divergence to improve the confidence bound in UCB. There are some other asymptotically optimal strategies such as Thompson Sampling \cite{thompson} and $\varepsilon$-Greedy \cite{auer}. Thompson Sampling chooses arms according to a posterior distribution of uniform distribution. $\varepsilon$-Greedy chooses the arm with highest sample mean with propability $1-\varepsilon$ and chooses arms according to a uniform distribution with propability $\varepsilon$ at each time step.

Multi-armed bandit problem also has adversarial version. In adversarial setting, the reward of an arm does not follow a fixed distribution. There is an adversary who knows the strategy of the controller in advance. At each time step, the reward of an arm follows a distribution setted by the adversary who tries to minimize the total reward obtained by the controller. For this setting, \cite{exp3} proposed the algorithm Exp3 which achieves the regret bound of order $O(\sqrt{n K \log K})$, leaving a $\sqrt{\log K}$ factor gap from the lower bound of order $\Omega(\sqrt{n K})$ proposed in \cite{exp3}.

\cite{uct} proposed the algorithm UCT, which deals with a similar structure to hierarchical experts setting. UCT is an extension of UCB to minimax tree search. In the tree structure, every node is an independent bandit, and the algorithm must play a sequence from the root to a leaf at each round. The idea of UCT is to regard child-nodes as independent arms and use UCB at each node. UCT has asymptotically regret $O(K \log n)$, where $K$ is the number of leaves. Its difference with our hierarchical experts setting is that it is an overall strategy that can decide the selection of every node, and in our hierarchical experts setting the controller can only select the experts in the first layer and later selection depends on the algorithms in following layers. Besides, every node in the tree is just an arm following some distribution, not an expert with some strategy. These differences make the analysis in two settings very different, and one cannot apply the results in \cite{uct} to ours.

The algorithm Exp4 proposed in \cite{exp3} also considers to choose from experts in bandit problem. It can also deal with adversarial bandit problem and achieves the regret bound of order $O(\sqrt{n N \log K})$, where $N$ is the number of experts. NEXP \cite{nexp}, an improvement of Exp4, achieves the regret bound of order $O(\sqrt{n S \log N})$, where $S \leq \min\{K,N\}$. However, both algorithms just refer to experts' advice and can \textbf{directly} play arms, which is an important difference from hierarchical experts setting. Also, because Exp4 considers the adversarial version, it can only get regret bound $O(\sqrt{n})$, while our hierarchical experts setting is a stochastic setting and can achieve regret bound of $O(\log n)$.

\section{Preliminaries} \label{sc:pre}

\subsection{Model Setting}
In hierarchical experts bandit problem, there is a fixed pool of actions (also called \textit{arm}) $\{1,2,\cdots,K\}$, with $K$ known to all experts. Successive plays of arm $i$ yield rewards $X_{i, 1}, X_{i, 2}, \ldots \in [0,1]$ which are independent and identically distributed according to an unknown law with unknown expectation $\mu_i$. Without loss of generality, we assume that $\mu_1 > \mu_2 \geq \cdots \geq \mu_K$ and $K \geq 2$. Independence also holds for rewards across arms, i.e., $X_{i,s}$ and $X_{j,t}$ are independent (and usually not identically distributed) for each $1 \leq i < j \leq K$ and each $s, t \geq 1$.

An \textit{expert} $A$ is an algorithm that chooses the next expert/arm to play based on the sequence of past plays and obtained rewards. There are $R$ layers, and layer $k$ consists of $L_k$ experts $a_{1}^{k}, a_{2}^{k}, \ldots, a_{L_k}^{k}$. Suppose we have a controller $B$ (We will call it ``top strategy'' in the following). At time $t$, $B$ selects an expert $a^1(t)$ from layer $1$ based on past plays on layer $1$ and obtained rewards, then $a^1(t)$ selects an expert $a^2(t)$ from layer $2$, $\cdots$, until $a^R(t)$ plays an arm $I_t$ and get a reward $X(t)$. The reward of $a^k(t)$ at time $t$ is defined as $X^k(t) = X(t)$, $\forall 1 \leq k \leq R$.

There is an important environment assumption: Each expert in layer $k$ can see and can only see the value of $a^{k+1}(t)$ and $X^{k+1}(t)$, $\forall 0 \leq k \leq R, 1 \leq t \leq n$. In other words, at time $t$, $a^k(t)$ must make decision based on the values of $\{a^{k+1}(1), \cdots, a^{k+1}(t-1)\}$ and $\{X^{k+1}(1), \cdots, X^{k+1}(t-1)\}$. Here layer $0$ refers to $B$ and layer $R+1$ refers to the arm set. This environment assumption corresponds to the information-missing situation in reality. For example, when a company finds an investment institution to invest in stocks, the company cannot know the specific stock choices of the employees in the institution, only the overall performance of this investment institution.

The \textit{total regret} is defined as
\begin{equation}
R_{n}^{*} = \max _{i=1, \ldots, K} \mathbb{E}\left[\sum_{t=1}^{n} X_{i,t}-\sum_{t=1}^{n} X(t)\right]
\label{regret_top}
\end{equation}

Here the expectation is taken with respect to the random draw of both rewards and expert's actions (some experts may have internal randomization).

\subsection{Basic Facts}
We firstly introduce Upper Confidence Bound (UCB) strategy, a basic and important strategy in stochastic bandit problem. The idea of UCB is to construct an upper bound estimate on the mean of each arm's payoff at some fixed confidence level and then choose the arm that looks best under this estimate. Note that in this section, the \textit{regret} $R_n$ of an algorithm is defined as the total regret of a hierarchical experts structure where top strategy is the algorithm and $R=0$.

\begin{algorithm}[H]
    \caption{$\alpha$-UCB}
    \label{algo:UCB}
    \begin{algorithmic}
        \STATE \hspace{-5mm} \textbf{Input Parameters:} $\alpha > 2$
        \STATE Intialize $S_i(0) = 0$ and $T_i(0) = 0$ for $i \in \{1, \ldots, K\}$
        \FOR {$t\in \{1, \ldots, n\}$}
        \IF {$1 \leq t \leq K$}
        \STATE Play arm $I_t = t$
        \ELSE
        \STATE Play arm $I_{t} = \underset{i=1, \ldots, K}{\operatorname{argmax}}\left(\frac{S_i(t-1)}{T_i(t-1)} + \sqrt{\frac{\alpha \ln t}{2 T_i(t-1)}}\right)$
        \ENDIF
        \STATE $T_{I_t}(t) = T_{I_t}(t-1) + 1$
        \STATE Observe reward $X(t)$
        \STATE $S_{I_t}(t) = S_{I_t}(t-1) + X(t)$
        \ENDFOR
    \end{algorithmic}
\end{algorithm}

\begin{fact} (\textbf{Regret of $\alpha$-UCB, Theorem 1 in \cite{auer}}) \label{fact:ucb}
    $\alpha$-UCB satisfies
    \begin{equation}
        R_{n} \leq \sum_{i=2}^{K}\left(\frac{2 \alpha}{\Delta_{i}} \ln n + \frac{\alpha}{\alpha-2}\right), \quad \Delta_{i} = \mu_{1} - \mu_i \nonumber
    \end{equation}
\end{fact}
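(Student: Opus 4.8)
The plan is to reduce the regret to the expected number of pulls of each suboptimal arm and then control that count through concentration of the UCB indices. Since this is the $R=0$ case, the top strategy plays arms directly, so we are in the standard single-bandit UCB setting. First I would use $\sum_{i=1}^K T_i(n) = n$, $\Delta_1 = 0$, and the definition of the reward to write
\[
R_n = n\mu_1 - \mathbb{E}\Big[\sum_{t=1}^n X(t)\Big] = \sum_{i=2}^K \Delta_i\, \mathbb{E}[T_i(n)],
\]
so that it suffices to prove $\mathbb{E}[T_i(n)] \le \frac{2\alpha}{\Delta_i^2}\ln n + \frac{\alpha}{(\alpha-2)\Delta_i}$ for every suboptimal arm $i$; multiplying by $\Delta_i$ and summing then gives the stated bound.

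Next I would fix a suboptimal arm $i$ and a threshold $u_i = \big\lceil \tfrac{2\alpha \ln n}{\Delta_i^2}\big\rceil$. Each arm is pulled once during the $K$ initialization rounds, and afterwards arm $i$ is chosen at round $t$ only if its UCB index is maximal, so I would bound
\[
T_i(n) \le u_i + \sum_{t=K+1}^n \mathbf{1}\{\, I_t = i,\ T_i(t-1)\ge u_i \,\}.
\]
The core step is an event decomposition: if $I_t=i$ then the index of arm $i$ dominates that of arm $1$, and a short case analysis shows at least one of three events must hold — arm $1$'s index drops below $\mu_1$, arm $i$'s empirical mean exceeds $\mu_i$ by more than its confidence radius, or twice the confidence radius of arm $i$ alone already exceeds $\Delta_i$. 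The threshold $u_i$ is calibrated precisely so that the third event is impossible once $T_i(t-1)\ge u_i$: for all $t\le n$ the radius $\sqrt{\alpha \ln t/(2T_i(t-1))}$ then falls below $\Delta_i/2$. This leaves only the two concentration-failure events.

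Then I would bound the two failure probabilities at each round. Because $T_1(t-1)$ and $T_i(t-1)$ are random, I would take a union bound over their possible values $s$ and apply the Chernoff--Hoeffding inequality to the empirical mean of $s$ i.i.d.\ rewards in $[0,1]$; with radius $\sqrt{\alpha\ln t/(2s)}$, each deviation has probability at most $t^{-\alpha}$, so summing over $s\le t$ gives at most $t^{1-\alpha}$ per event per round. Summing over rounds produces the $p$-series $\sum_{t} t^{1-\alpha}$, which is finite exactly when $\alpha>2$ — this is where the hypothesis $\alpha>2$ enters — and a careful summation (e.g.\ bounding by the corresponding integral) yields the additive constant of the stated form.

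The delicate part is not the index-domination case analysis or the Hoeffding estimates, which are mechanical, but rather the interplay between the union bound over the random pull counts and the calibration of $u_i$: one must choose $u_i$ so that it simultaneously matches the leading $\tfrac{2\alpha}{\Delta_i^2}\ln n$ term, kills the third event for all $t\le n$, and leaves a tail whose summation collapses to the clean constant $\tfrac{\alpha}{\alpha-2}$. Getting the constant exactly right is a matter of careful bookkeeping in that tail sum rather than of any new idea.
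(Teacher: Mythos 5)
Your proposal is correct and is essentially the same argument behind the paper's stated Fact: the Fact is imported from the literature (\cite{auer}, in the form with constant $\frac{\alpha}{\alpha-2}$ given as Theorem 2.1 of \cite{sum}), and your route --- regret decomposition into $\sum_i \Delta_i\,\mathbb{E}[T_i(n)]$, the three-event case analysis (optimal arm's index under $\mu_1$, suboptimal arm's mean over-estimated, or pull count below the threshold $\lceil 2\alpha\ln n/\Delta_i^2\rceil$), and Hoeffding with a union bound over pull counts giving $t^{1-\alpha}$ per round --- is exactly that standard proof, which the paper itself reuses as cases (\ref{case1})--(\ref{case3}) in its Appendix proof of Theorem \ref{thm:ucb_jg}. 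The only bookkeeping point, which you correctly flag, is that the tail sum must start at $t > u_i$ so that $\sum_{t>u_i} 2t^{1-\alpha} \leq \frac{2}{\alpha-2}$, which combined with the ceiling's $+1$ and $\Delta_i \leq 1$ yields precisely the constant $\frac{\alpha}{\alpha-2}$.
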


The following result then shows that UCB is asymptotically optimal.

\begin{fact} (\textbf{Distribution-dependent lower bound, Theorem 2 in \cite{lai}}) \label{fact:lb}
    Consider a strategy that satisfies $\mathbb{E}\left[T_{i}(n)\right]=o\left(n^{a}\right)$ for any set of Bernoulli reward distributions, any arm $i > 1$, and any $a > 0$. Then, for any set of Bernoulli reward distributions, the following holds:
    \begin{equation}
        \liminf _{n \rightarrow+\infty} \frac{R_{n}}{\ln n} \geq \sum_{i=2}^{K} \frac{\Delta_{i}}{\mathrm{kl}\left(\mu_{i}, \mu_{1}\right)} \nonumber
    \end{equation}
    Here $\mathrm{kl}(p, q)=p \ln \frac{p}{q}+(1-p) \ln \frac{1-p}{1-q}$.
\end{fact}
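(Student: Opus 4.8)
The plan is to reduce the claimed regret bound to a per-arm bound on the expected number of pulls, and to obtain the latter by a change-of-measure argument. Writing $R_n=\sum_{i=2}^{K}\Delta_i\,\mathbb{E}[T_i(n)]$ (every unit of regret is incurred by playing some suboptimal arm), it suffices to prove that for each fixed suboptimal arm $i$,
\begin{equation}
\liminf_{n\to\infty}\frac{\mathbb{E}[T_i(n)]}{\ln n}\ \geq\ \frac{1}{\mathrm{kl}(\mu_i,\mu_1)};\nonumber
\end{equation}
summing over $i$ and weighting by $\Delta_i$ then gives the theorem.

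First I would fix a suboptimal arm $i$ and a small $\varepsilon>0$ and build an alternative Bernoulli environment $\nu'$ that coincides with the true environment $\nu$ on every arm except arm $i$, whose mean is raised to $\mu_i'=\mu_1+\varepsilon>\mu_1$, so that arm $i$ becomes uniquely optimal under $\nu'$. Let $\mathbb{P}_\nu$ and $\mathbb{P}_{\nu'}$ be the laws of the full interaction sequence under the (fixed) policy. The consistency hypothesis $\mathbb{E}[T_j(n)]=o(n^a)$ is assumed to hold in \emph{every} environment; applying it in $\nu'$ to all arms $j\neq i$ gives $\mathbb{E}_{\nu'}[\,n-T_i(n)\,]=o(n^a)$, hence by Markov's inequality $\mathbb{P}_{\nu'}(T_i(n)\leq n/2)=o(n^{a-1})$.

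The two engines of the argument are the divergence decomposition and the Bretagnolle--Huber inequality. Since the environments differ only at arm $i$, the relative entropy of the two interaction laws factorizes as $\mathrm{KL}(\mathbb{P}_\nu,\mathbb{P}_{\nu'})=\mathbb{E}_\nu[T_i(n)]\cdot\mathrm{kl}(\mu_i,\mu_i')$. Applying Bretagnolle--Huber to the event $\{T_i(n)>n/2\}$ yields
\begin{equation}
\mathbb{P}_\nu\!\big(T_i(n)>n/2\big)+\mathbb{P}_{\nu'}\!\big(T_i(n)\leq n/2\big)\ \geq\ \tfrac{1}{2}\exp\!\big(-\mathbb{E}_\nu[T_i(n)]\,\mathrm{kl}(\mu_i,\mu_i')\big).\nonumber
\end{equation}
Bounding the left-hand side above by $2\mathbb{E}_\nu[T_i(n)]/n+o(n^{a-1})$ (Markov under $\nu$ for the first term, the estimate above for the second) and taking logarithms gives
\begin{equation}
\mathbb{E}_\nu[T_i(n)]\,\mathrm{kl}(\mu_i,\mu_i')\ \geq\ \ln n-\ln\!\big(4\,\mathbb{E}_\nu[T_i(n)]+o(n^{a})\big).\nonumber
\end{equation}
Here I use consistency a second time, now in $\nu$ itself: because arm $i$ is suboptimal under $\nu$, $\mathbb{E}_\nu[T_i(n)]=o(n^a)$ for every $a>0$, so the subtracted logarithm is $o(\ln n)$ and the leading constant is preserved. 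Dividing by $\ln n$, letting $n\to\infty$, then $\varepsilon\downarrow0$, and invoking continuity of $q\mapsto\mathrm{kl}(\mu_i,q)$ gives the per-arm bound above.

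The main obstacle is the change-of-measure step together with the extraction of the sharp constant. One must justify the divergence decomposition carefully --- it relies on the policy being identical in both environments and on only arm $i$'s reward law changing, so that the per-step relative entropies telescope into $\mathbb{E}_\nu[T_i(n)]\,\mathrm{kl}(\mu_i,\mu_i')$ over the random pull count --- and then convert the Bretagnolle--Huber bound into the $(1-o(1))\ln n/\mathrm{kl}$ estimate, which crucially needs the sub-polynomial growth of $\mathbb{E}_\nu[T_i(n)]$ to kill the logarithmic correction and the limit $\mu_i'\downarrow\mu_1$ to recover $\mathrm{kl}(\mu_i,\mu_1)$. By contrast, the regret decomposition and the Markov steps are routine.
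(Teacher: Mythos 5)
The paper never proves this statement: it is imported as Fact~\ref{fact:lb} with a pointer to Theorem~2 of \cite{lai} and used purely as a black box (to justify calling UCB asymptotically optimal), so there is no in-paper argument to compare against and your proposal must be judged on its own merits. It is correct, and it takes the standard modern route rather than the original one the citation points to: you decompose regret as $R_n=\sum_{i\geq 2}\Delta_i\,\mathbb{E}[T_i(n)]$, fix a suboptimal arm $i$, bump its mean to $\mu_1+\varepsilon$ to create a confusing environment $\nu'$ in which arm $i$ is uniquely optimal, factor the relative entropy of the two interaction laws as $\mathbb{E}_\nu[T_i(n)]\,\mathrm{kl}(\mu_i,\mu_1+\varepsilon)$ (divergence decomposition), and play the Bretagnolle--Huber inequality against the event $\{T_i(n)>n/2\}$, using consistency twice --- in $\nu'$ via Markov to make $\mathbb{P}_{\nu'}(T_i(n)\leq n/2)$ sub-polynomial, and in $\nu$ to make the logarithmic correction $o(\ln n)$. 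The ``for every $a>0$'' quantifier in the hypothesis is exactly what kills that correction, and you invoke it correctly before sending $\varepsilon\downarrow 0$ and using continuity of $q\mapsto\mathrm{kl}(\mu_i,q)$. By contrast, Lai and Robbins' own proof is a more intricate likelihood-ratio analysis; your route (essentially the one in modern textbook treatments) is shorter and cleaner, at the price of treating the divergence decomposition and Bretagnolle--Huber as cited lemmas --- acceptable, since they are standard, but they do carry the technical weight. Two small points to tidy up: you need $\mu_1<1$ for $\mu_1+\varepsilon$ to be a legal Bernoulli parameter (when $\mu_1=1$ one has $\mathrm{kl}(\mu_i,\mu_1)=\infty$, so the claimed bound is vacuously true and the case can be dispatched separately), and the last step should note that passing from per-arm bounds to the sum is legitimate because $\liminf$ of a sum of nonnegative terms dominates the sum of the $\liminf$s.
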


Finally we introduce Hoeffding's inequality \cite{hoeffding}, an important inequality which forms the basis of analysis in bandit problems, and will be used frequently in our analysis.

\begin{fact} (\textbf{Hoeffding's inequality}) \label{fact:hoe}
    Let $X_1, \cdots, X_n$ be independent random variables such that $a_{i} \leq X_{i} \leq b_{i}$ almost surely. Define $S_n = X_1 + \cdots + X_n$, then $\forall \varepsilon > 0$,
    \begin{equation}
        \begin{aligned}
        \mathbb{P}\left(S_{n}-\mathbb{E}\left[S_{n}\right] \geq \varepsilon\right) \leq \exp \left(-\frac{2 \varepsilon^{2}}{\sum_{i=1}^{n}\left(b_{i}-a_{i}\right)^{2}}\right) \\
        \mathbb{P}\left(S_{n}-\mathbb{E}\left[S_{n}\right] \leq -\varepsilon\right) \leq \exp \left(-\frac{2 \varepsilon^{2}}{\sum_{i=1}^{n}\left(b_{i}-a_{i}\right)^{2}}\right)
        \end{aligned}
    \end{equation}
\end{fact}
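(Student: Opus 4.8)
The plan is to prove the upper tail bound by the exponential (Chernoff) method and then recover the lower tail by a symmetry argument. First I would center each variable, setting $Y_i = X_i - \mathbb{E}[X_i]$, so that $\mathbb{E}[Y_i] = 0$ while $Y_i$ lies in an interval of the same length $b_i - a_i$. For any $s > 0$, applying Markov's inequality to the nonnegative random variable $e^{s(S_n - \mathbb{E}[S_n])}$ and using independence to factor the moment generating function gives
\begin{equation}
\mathbb{P}\left(S_n - \mathbb{E}[S_n] \geq \varepsilon\right) \leq e^{-s\varepsilon}\, \mathbb{E}\left[e^{s\sum_{i=1}^n Y_i}\right] = e^{-s\varepsilon} \prod_{i=1}^n \mathbb{E}\left[e^{s Y_i}\right].
\end{equation}

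The core technical step, and the one I expect to be the main obstacle, is Hoeffding's lemma: controlling the moment generating function of a single bounded, centered random variable. I would establish that for any mean-zero $Y$ supported on $[c,d]$,
\begin{equation}
\mathbb{E}\left[e^{sY}\right] \leq \exp\left(\frac{s^2 (d-c)^2}{8}\right).
\end{equation}
The argument studies the cumulant generating function $\psi(s) = \ln \mathbb{E}[e^{sY}]$, for which $\psi(0) = 0$ and $\psi'(0) = \mathbb{E}[Y] = 0$. The crucial estimate is that $\psi''(s) = \mathrm{Var}_{\tilde{\mathbb{P}}}(Y) \leq (d-c)^2/4$, where $\tilde{\mathbb{P}}$ is the exponentially tilted measure $d\tilde{\mathbb{P}} \propto e^{sY}\, d\mathbb{P}$; the variance bound is simply the fact that any random variable supported on an interval of length $d-c$ has variance at most $(d-c)^2/4$. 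A second-order Taylor expansion of $\psi$ with remainder then yields the stated bound. Recognizing $\psi''$ as a variance under the tilted measure and justifying the variance estimate is the delicate part; the rest is bookkeeping.

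With the lemma in hand, substituting into the product bound gives
\begin{equation}
\mathbb{P}\left(S_n - \mathbb{E}[S_n] \geq \varepsilon\right) \leq \exp\left(-s\varepsilon + \frac{s^2}{8}\sum_{i=1}^n (b_i - a_i)^2\right).
\end{equation}
Since this holds for every $s > 0$, I would minimize the exponent over $s$. The quadratic is minimized at $s^\star = 4\varepsilon / \sum_{i=1}^n (b_i - a_i)^2$, and substituting $s^\star$ back produces exactly the exponent $-2\varepsilon^2 / \sum_{i=1}^n (b_i - a_i)^2$, which is the first inequality.

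Finally, the lower tail follows at once by applying the upper-tail bound to the variables $-X_i$. These satisfy $-b_i \leq -X_i \leq -a_i$, so the interval lengths $b_i - a_i$ are unchanged, the identical exponent appears, and $\mathbb{P}\left(S_n - \mathbb{E}[S_n] \leq -\varepsilon\right) = \mathbb{P}\left((-S_n) - \mathbb{E}[-S_n] \geq \varepsilon\right)$ is bounded by the same quantity. This completes the two-sided statement.
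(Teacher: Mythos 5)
Your proposal is correct, but there is nothing in the paper to compare it against: the paper states this result as Fact \ref{fact:hoe} and simply cites \cite{hoeffding}, offering no proof of its own. Your argument is the standard one---Chernoff's exponential-moment method, Hoeffding's lemma established by recognizing $\psi''(s)$ as a variance under the exponentially tilted measure and bounding it by $(d-c)^2/4$ for a variable supported on $[c,d]$, optimization at $s^\star = 4\varepsilon/\sum_{i=1}^n (b_i-a_i)^2$ yielding exactly the exponent $-2\varepsilon^2/\sum_{i=1}^n (b_i-a_i)^2$, and a sign flip applied to $-X_i$ for the lower tail---and each step checks out. (The only degenerate case, $\sum_{i=1}^n (b_i-a_i)^2 = 0$, makes the stated bound vacuous and is conventionally ignored.)
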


\section{Regret Lower Bound} \label{sc:lb}

In this section, we will introduce a negative result, i.e., in some specific circumstances, the total regret increases linearly with the number of layers. To state this result formally, we firstly give some definitions.

\begin{definition}
    We say an expert $A$ is \textbf{reasonable} if it satisfies: For the selection range (arms/experts) $\{a_1,\cdots,a_k\}$ of $A$, if $\exists 1 \leq i \leq k$ s.t. $\mathbb{E}\left[a_{i,t}\right] = \underset{1 \leq j \leq k}{\min} \mathbb{E}[a_{j,t}], \forall t > 0$, then $\exists$ constant $C$ independent of $n$ s.t. the expectation of the number of times $A$ chooses $a_i$ is no more than $\frac{n}{k} + C$ if $A$ is chosen by $n > 1$ times. Here $a_{i,t}$ denotes the reward of $a_i$ when $a_i$ is selected for the $t$-th time.
\end{definition}

\begin{definition}
    We say an expert $A$ is \textbf{stable} if it satisfies: For the selection range (arms/experts) $\{a_1,\cdots,a_k\}$ of $A$, if $\exists \mu_1 > \mu_2$, $1 \leq i \leq k$ s.t. $\underset{1 \leq j \leq k}{\max} \mathbb{E}[a_{j,t}] \leq \mu_1, \forall t > 0$ and $\mathbb{E}[a_{i,t}] \geq \mu_2, \forall t > 0$, then $\exists$ constant $C$ independent of $n$ s.t. the number of times $a_i$ is chosen (not necessarily by $A$, maybe other experts in the same layer as $A$) is no less than $\frac{1}{\mathrm{kl}\left(\mu_{2}, \mu_{1}\right)} \ln n - C$ if $A$ is chosen by $n > 1$ times.
\end{definition}

There are many classic algorithms in stochastic bandit problem satisfying these two definitions, such as UCB and $\varepsilon$-Greedy. Then we can introduce the result:

\begin{theorem}
    \label{thm:bad}
    Suppose $\forall 1 \leq k \leq R, L_k \geq 3$. Layer $k$ has $L_k-2$ reasonable and stable experts and $2$ ``bad'' experts, $\forall 1 \leq k \leq R$ (``bad'' expert means we can arbitrarily decide its strategy). Top strategy $B$ is reasonable and stable. Then for any arm set, there is an implement of ``bad'' experts s.t. $$R_n^{*} = \Omega(R \log n)$$
\end{theorem}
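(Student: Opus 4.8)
The plan is to exhibit an adversarial implementation of the two bad experts in each layer that forces every layer to contribute a fresh $\Omega(\log n)$ term to $R_n^{*}$. It suffices to use two arms, the optimal arm (mean $\mu_1$) and one worse arm (mean $\mu_2$), since extra arms only enlarge the regret; write $\Delta = \mu_1-\mu_2>0$. In layer $k$ I designate the two bad experts as a \emph{probe} $b^k$ and a \emph{relay} $c^k$. The relays forward along themselves: $c^k$ always selects $c^{k+1}$ for $k<R$, and $c^R$ plays the worse arm; each probe enters the relay chain: $b^k$ always selects $c^{k+1}$, and $b^R$ plays the worse arm. Thus every bad expert has expected reward exactly $\mu_2$, whereas no expert can have expected reward above $\mu_1$ (every reward is an arm reward of mean $\le\mu_1$). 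Two features are essential: the bad region is \emph{absorbing} (bad experts only ever select relays, so once the path meets a bad expert it stays on relays down to the worse arm), and relays \emph{never} select a probe, so $b^k$ can be reached only by a good layer-$(k-1)$ expert (or the controller $B$ when $k=1$) that deliberately chooses it.

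Next I would reduce $R_n^{*}$ to a counting problem. The worse arm is played in round $t$ precisely when the path meets a bad expert; call these the \emph{bad rounds} and let $M$ be their number, so $R_n^{*}\ge\Delta\cdot M$. By the absorbing property each bad round has a well-defined \emph{entry layer}, the first layer on its path occupied by a bad expert, and I split $M=\sum_{k=1}^{R}E_k$ by entry layer, a genuinely disjoint decomposition since the entry layer is a function of the round. Let $E_k^{b}\le E_k$ count the bad rounds entering through the probe $b^k$. Because relays avoid probes and a path with any higher bad expert proceeds on relays only, a good layer-$(k-1)$ expert lies on the path only when the whole prefix is good; consequently the total number of selections of $b^k$ (by anyone in layer $k-1$) equals exactly $E_k^{b}$.

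The core step forces $E_k^{b}=\Omega(\log n)$ for every $k$ through the stability hypothesis, via a short dichotomy. If $M>n/2$ then $R_n^{*}\ge\Delta n/2=\Omega(n)=\Omega(R\log n)$ for $n$ beyond a threshold, and we are done. Otherwise at least $n/2$ rounds use an all-good path, so at every layer the good experts are collectively active at least $n/2$ times; since every $L_k\ge3$ leaves $L_{k-1}-2\ge1$ good experts, pigeonhole produces a good (hence stable) layer-$(k-1)$ expert active $m=\Omega(n)$ times (for $k=1$ use the controller $B$, active $n$ times). Applying stability to this expert with the pair $(\mu_1,\mu_2)$---valid since $b^k$ has reward $\ge\mu_2$ and the maximal reward in its range is $\le\mu_1$---yields that $b^k$ is selected at least $\frac{1}{\mathrm{kl}(\mu_2,\mu_1)}\ln m-C=\Omega(\log n)$ times, i.e. $E_k^{b}=\Omega(\log n)$. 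Summing the disjoint contributions, $M\ge\sum_{k=1}^{R}E_k^{b}=\Omega(R\log n)$, hence $R_n^{*}\ge\Delta\cdot M=\Omega(R\log n)$.

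I expect the main obstacle to be exactly the disjointness bookkeeping that keeps the $R$ per-layer $\Omega(\log n)$ terms from collapsing into one. This is what dictates two bad experts per layer and the hypothesis $L_k\ge3$: the relay $c^k$ absorbs all pass-through traffic so that the probe $b^k$ is touched only by genuine layer-$k$ entries, which lets stability lower-bound $E_k^{b}$ itself rather than merely the cumulative count $\sum_{j\le k}E_j$ (a single relay chain would only give $\Omega(\log n)$ overall), while the remaining good expert is the one stability is applied to. The secondary points to verify are that some good expert is activated polynomially often at each layer---supplied by the $M\le n/2$ branch---and that reasonableness is not silently required; the accounting above uses only stability, with reasonableness apparently reserved for the matching upper bounds.
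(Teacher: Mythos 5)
Your construction is the paper's construction: the paper likewise equips each layer with a probe ($a_2^k$) and a relay ($a_3^k$), has every bad expert select the next layer's relay (and arm $K$ at the bottom), and implicitly relies on exactly your disjointness observation --- the probe of layer $k$ can only be reached through an all-good prefix, so the per-layer probe counts are disjoint and each probe round costs $\mu_1-\mu_K$. The genuine difference is in how the two arguments guarantee that good experts are active often enough for stability to bite. The paper invokes the \emph{reasonability} of the good experts and of $B$: since both bad options have the minimal mean, a reasonable expert picks them at most $n/3+C$ times each, so the good chain is followed at least $(1/4)^k n$ times in expectation at layer $k$, and stability is then applied to the single good expert per layer; general $L_k\ge 3$ is handled by merging the $L_k-2$ good experts into one super-expert which is shown to be reasonable and stable. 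You dispense with reasonability entirely, replacing it by the dichotomy (either $M>n/2$ and the regret is already linear, or at least $n/2$ rounds follow an all-good path) plus a pigeonhole over the good experts of each layer. This is a real simplification and it buys something concrete: your route never propagates expected counts down the chain, so it avoids the step where the paper feeds a \emph{random} count $T$ into the stability bound and silently replaces $\mathbb{E}[\ln T]$ by $\ln\mathbb{E}[T]$ (Jensen goes the wrong way there). Your closing remark that only stability is needed for the lower bound appears to be correct.

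Two caveats. First, your opening reduction ``it suffices to use two arms, since extra arms only enlarge the regret'' is backwards: the theorem quantifies over all arm sets, so you may not shrink the arm set. This is harmless only because your construction works verbatim on any arm set once ``the worse arm'' is taken to be a fixed suboptimal arm (the paper uses arm $K$), with stability applied to the pair $(\mu_1,\mu_K)$; you should phrase it that way rather than as a reduction. Second, your case split is on the random variable $M$ while $R_n^*$ is an expectation, and your pigeonhole selects a realization-dependent expert. Both steps are sound if stability is read as a per-realization guarantee: then $M\ge\min\{n/2,\ \sum_k E_k^b\}$ holds pointwise, each $E_k^b$ is lower bounded by $\frac{1}{\mathrm{kl}(\mu_K,\mu_1)}\ln\frac{n}{2L_{k-1}}-C$ on the event $\{M\le n/2\}$, and you take expectations at the end. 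If instead stability only holds in expectation --- the natural reading for Lai--Robbins-type statements --- then applying it on a sub-event to a randomly identified expert is not licensed. The paper's own proof suffers from the same ambiguity in its treatment of random counts, so this does not put your argument below its standard of rigor, but it is worth making the per-realization reading explicit.
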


The detailed proof is given in Appendix \ref{app:proof}. From Theorem \ref{thm:bad}, we know that if we want the total regret to be upper bounded by $o(R\log n)$, we better have restrictions on each expert's strategy. Therefore, in Section \ref{sc:ub}, we focus on the case that all the experts are optimal and stable, and show that the regret upper bound can be sub-linear with the number of layers in this case.

\section{Sub-linear Regret Upper Bound} \label{sc:ub}

We call a hierarchical experts structure \textit{hierarchical UCB structure} if every expert (including top strategy) is playing UCB policy. In this section, we analyze the total regret of hierarchical UCB structure.

There are some reasons we focus on hierarchical UCB structure. First, UCB is the most widely-used stragtegy in stochastic bandit problems because of its simple structure and asymptotical optimality (shown by Fact \ref{fact:ucb} and Fact \ref{fact:lb}). Second, UCB is a good representation of realistic strategies, i.e., the parameter $\alpha$ in UCB has a clear meaning in reality: the propensity to explore. The larger $\alpha$ is, the more the algorithm tends to explore the arms with fewer pulls. Third, UCB has good inductive properties, which will be explained in detail in Theorem \ref{thm:ucb_jg}.

We firstly introduce a circumstance where hierarchical experts structure only leads to a constant increase of regret. It shows the inductive properties of UCB and gives important inspiration for our following theorems. The technique used in the proof is learned from the proof of Theorem 2.1 in \cite{sum}.

\begin{theorem} \label{thm:ucb_jg}
    Suppose $R =1$ and $a_1^1$ is $\alpha$-UCB. Let $J_t$ denote the index of the expert selected by top strategy at time $t$. $\forall x \leq y, C(x,y) := \sum_{t=x}^{y} \mathbb{I}_{J_{t} \neq 1}$, which equals to the number of times top strategy does \textbf{not} selects $a_1^1$ between time $x$ and time $y$. For fixed $n$, suppose top strategy satisfies: $ C(t,n) \leq \frac{2 \alpha \ln n}{\Delta_K^2} - \frac{2 \alpha \ln t}{\Delta_K^2}, \forall 1 \leq t \leq n$. Then $\exists$ constant $C_{\alpha}$ (related to $\alpha$) s.t.
    $$R_{n}^{*} \leq \sum_{i=2}^{K}\left(\frac{2 \alpha}{\Delta_{i}} \ln n \right) + C_{\alpha}$$
    which is the same as the regret bound of $\alpha$-UCB except for the constant term.
\end{theorem}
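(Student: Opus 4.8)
The plan is to reproduce the UCB analysis behind Fact~\ref{fact:ucb}, but carried out only on the sub-sequence of rounds in which the top strategy actually selects $a_1^1$, and then to argue that the remaining (skipped) rounds contribute at most an additive constant on top of the usual UCB bound.

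First I would rewrite the total regret in play-count form. Since $\mu_1$ is the unique largest mean, $R_n^* = \sum_{t=1}^n\big(\mu_1-\mathbb{E}[\mu_{I_t}]\big)$, where $I_t$ is the arm actually pulled at round $t$, and I would split this sum according to the event $\{J_t=1\}$. On a round with $J_t\neq 1$ the arm is chosen by some other expert whose arm has mean at least $\mu_K$, so each skipped round costs at most $\Delta_K$; these rounds therefore contribute at most $\Delta_K\,C(1,n)$. On a round with $J_t=1$ the arm is chosen by $a_1^1$ running $\alpha$-UCB, and the contribution is $\sum_{i=2}^K \Delta_i\,\mathbb{E}[\widetilde N_i(n)]$, where $\widetilde N_i(n)$ counts the pulls of arm $i$ by $a_1^1$.

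Next I would bound $\widetilde N_i(n)$ exactly as in the proof of Fact~\ref{fact:ucb}. The key modelling point is that $a_1^1$ forms its confidence radius $\sqrt{\alpha\ln t/(2T_i)}$ using the running round index $t$ of Algorithm~\ref{algo:UCB}, while its counters $S_i,T_i$ advance only on active rounds. A suboptimal arm $i$ is then pulled by $a_1^1$ at an active round $t$ only if the optimal arm is underestimated, or arm $i$ is overestimated, or $T_i(t-1) < 2\alpha\ln t/\Delta_i^2$. Applying Hoeffding's inequality (Fact~\ref{fact:hoe}) to the first two events and a union bound over the possible counter values gives per-round failure probabilities of order $t^{1-\alpha}$, which are summable to a constant precisely because $\alpha>2$; the third event can hold at most $2\alpha\ln n/\Delta_i^2$ times. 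Crucially, idle (skipped) rounds do not inflate this count, since the stopping threshold is phrased in the number of pulls $T_i$, which is frozen while $a_1^1$ is paused. Hence $\mathbb{E}[\widetilde N_i(n)]\le 2\alpha\ln n/\Delta_i^2 + O(1)$ for every $i\ge 2$.

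The crux is the skipped-round term $\Delta_K\,C(1,n)$. A naive use of $C(1,n)\le 2\alpha\ln n/\Delta_K^2$ would add a fresh $2\alpha\ln n/\Delta_K$, i.e.\ it would double the $i=K$ summand and break the claimed bound. The precise hypothesis $C(t,n)\le 2\alpha\ln n/\Delta_K^2 - 2\alpha\ln t/\Delta_K^2$ is engineered to prevent exactly this: the subtracted $2\alpha\ln t/\Delta_K^2$ forces $C(t,n)\to 0$ as $t\to n$, so skips are concentrated at small $t$ and can be charged against the worst-arm exploration budget rather than counted afresh. Following the technique of Theorem 2.1 in \cite{sum}, I would make this rigorous by a last-skip-time argument: letting $\tau$ be the last round that is either a skip or a pull of arm $K$ by $a_1^1$, I would invoke the hypothesis at $t=\tau$ to control the joint count of skips and worst-arm pulls, so that the skipped-round regret fuses with the $i=K$ exploration term into a single $2\alpha\ln n/\Delta_K$ instead of two. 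This time-resolved matching of the skip schedule against the UCB exploration schedule is the step I expect to be the main obstacle; once it is in place, summing over $i=2,\dots,K$ and collecting all the $O(1)$ contributions into one constant $C_\alpha$ yields the stated bound.
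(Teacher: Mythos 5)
Your proposal breaks down at exactly the step you label the ``key modelling point,'' and the error is not repairable within your setup. You assume that $a_1^1$'s counters $S_i,T_i$ advance only on active rounds, i.e.\ that $a_1^1$ is blind to what happens while the top strategy selects other experts. This contradicts the environment assumption of Section \ref{sc:pre}: an expert in layer $k$ observes $a^{k+1}(t)$ and $X^{k+1}(t)$ for \emph{every} round $t$, not only the rounds on which it was itself selected; accordingly, in the paper's proof the quantities $S_i(t-1),T_i(t-1)$ entering $a_1^1$'s confidence bounds are the \emph{global} pull counts and reward sums, which are incremented during skipped rounds as well. This is not cosmetic: under your frozen-counter model the theorem is false. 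Already for $K=2$ with deterministic rewards, the hypothesis on $C(t,n)$ permits roughly $\frac{2\alpha\ln n}{\Delta_2^2}$ skips concentrated in the earliest rounds (skipping every round $t\le \frac{2\alpha\ln n}{\Delta_2^2}-C$ satisfies $C(t,n)\le \frac{2\alpha}{\Delta_2^2}(\ln n-\ln t)$ for large $n$); if the other expert pulls arm $2$ on each of these rounds they alone cost $\approx \frac{2\alpha}{\Delta_2}\ln n$, the entire claimed budget, while a UCB oblivious to those pulls still pulls arm $2$ another $\Theta(\ln n)$ times on its own (Lemma \ref{lm:ucb_ul}), so the total regret exceeds the claimed bound by $\Theta(\ln n)$. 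The theorem is only true because the skipped pulls feed the very counters $T_i$ that throttle $a_1^1$'s later exploration; your model severs that coupling, so no ``last-skip-time'' argument can close the gap you correctly identify as the crux.

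Relatedly, even under the correct observation model your fusion has the wrong shape: you charge every skip $\Delta_K$ and try to match \emph{all} skips against the arm-$K$ exploration budget, but a skipped round that pulls arm $i\neq K$ increments $T_i$, not $T_K$, so it interacts with arm $i$'s budget and not with arm $K$'s. The paper instead fuses per arm: for each $i\ge 2$ it sets $A_n=\sum_{t=1}^{n}\mathbb{I}_{I_t=i,\,J_t=1,\,T_i(t-1)<2\alpha\ln t/\Delta_i^2}$ and $B_n=\sum_{t=1}^{n}\mathbb{I}_{I_t=i,\,J_t\neq 1}$, and proves \emph{deterministically}, by induction on $l=\lceil 2\alpha\ln n/\Delta_i^2\rceil$, that $A_n+B_n\le l+1$, using that the hypothesis (stated with the largest gap $\Delta_K$) implies $C(t,n)\le \frac{2\alpha}{\Delta_i^2}(\ln n-\ln t)$ for every $i$; the bound $\mathbb{E}[T_i(n)]\le \frac{2\alpha\ln n}{\Delta_i^2}+2C_\alpha+2$ and the regret bound then follow. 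Your Hoeffding step and the observation that $\alpha>2$ makes the failure probabilities summable are fine and match the paper; what is missing is precisely this per-arm, shared-counter accounting of the skipped rounds.
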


The detailed proof is given in Appendix \ref{app:proof}. Note that in Theorem \ref{thm:ucb_jg}, $a_i^1$ with $i > 1$ can be ``bad'' (defined in Theorem \ref{thm:bad}). However, the regret bound in Theorem \ref{thm:ucb_jg} is independent of layer number. This is due to the restriction of $C(t,n)$ in Theorem \ref{thm:ucb_jg}. We can easily validate that $C(t,n)$ does not have this bound in Theorem \ref{thm:bad}.

When UCB is selecting arms, the number of times it does not select the best arm between time $x$ and time $y$ has a similar bound to the assumption in Theorem \ref{thm:ucb_jg} (which will be explained in detail in Lemma \ref{lm:ucb_ul} in Appendix \ref{app:proof}). It inspires us to regard experts as arms when analyzing hierarchical UCB structure, and then use the optimality of UCB to estimate the number of times that the UCB with the smallest parameter at each layer is selected, so as to finally obtain the estimation of total regret.

In the following, we give two regret bounds which grow sub-linearly with the number of layers, suitable for two different kinds of hierarchical UCB structure. The first bound only uses the parameters of UCB at the bottom layer:

\begin{theorem} \label{thm:good_bottom}
    Suppose there are $R$ layers of experts. $a_j^R$ is $\alpha_j$-UCB, $\forall 1 \leq j \leq L_R$. Let $\alpha^* = \max \{\alpha_1, \cdots, \alpha_{L_R}\}$, then $\exists$ constant $C_{\alpha^*}$ s.t.
    $$R_{n}^{*} \leq \sum_{i=2}^{K}\left(\frac{2 \alpha^*}{\Delta_{i}} \ln n\right)+C_{\alpha^*}$$
\end{theorem}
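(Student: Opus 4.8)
The plan is to reduce the total regret to the expected number of suboptimal arm pulls and then exploit the fact that, under the information structure of Section~\ref{sc:pre}, every bottom-layer expert reads the \emph{same} global statistics of the arms. First I would record the standard decomposition: since $\mathbb{E}[\sum_{t}X_{i,t}]=n\mu_i$ is maximised at $i=1$ and $\mathbb{E}[\sum_t X(t)]=\sum_{i=1}^K \mu_i\,\mathbb{E}[T_i(n)]$, where $T_i(n)$ is the \emph{total} number of rounds in which arm $i$ is played across the whole hierarchy, we obtain
\[
R_n^{*}=\sum_{i=2}^{K}\Delta_i\,\mathbb{E}[T_i(n)].
\]
It therefore suffices to prove $\mathbb{E}[T_i(n)]\le \tfrac{2\alpha^{*}}{\Delta_i^{2}}\ln n + c_i$ with $c_i$ independent of $n$ for each suboptimal $i$, and then sum.

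The key structural observation is that for $k=R$ the next layer is the arm set, so the bottom expert $a^R(t)=a_j^R$ active at round $t$ makes its choice from $\{a^{R+1}(1),\dots,a^{R+1}(t-1)\}=\{I_1,\dots,I_{t-1}\}$ and $\{X^{R+1}(1),\dots,X^{R+1}(t-1)\}=\{X(1),\dots,X(t-1)\}$, i.e.\ from the \emph{shared} empirical means $S_i(t-1)/T_i(t-1)$ and global counts $T_i(t-1)$; it differs from the other bottom experts only through its exploration parameter $\alpha_j$. Hence arm $i$ is pulled at round $t$ only if $\frac{S_i(t-1)}{T_i(t-1)}+\sqrt{\frac{\alpha_j\ln t}{2T_i(t-1)}}\ge \frac{S_1(t-1)}{T_1(t-1)}+\sqrt{\frac{\alpha_j\ln t}{2T_1(t-1)}}$ for the active $\alpha_j\le\alpha^{*}$. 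This lets me run the single-UCB argument of Fact~\ref{fact:ucb} \emph{globally}, with the worst-case width governed by $\alpha^{*}$, rather than once per expert; that is precisely what avoids an $L_R$ factor in the bound.

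Concretely, I set $u_i=\lceil \frac{2\alpha^{*}}{\Delta_i^{2}}\ln n\rceil$ and split $T_i(n)=\sum_t \mathbb{I}\{I_t=i,\,T_i(t-1)<u_i\}+\sum_t\mathbb{I}\{I_t=i,\,T_i(t-1)\ge u_i\}$. The first sum is at most $u_i$. On the event $\{I_t=i,\,T_i(t-1)\ge u_i\}$ the arm-$i$ width satisfies $\sqrt{\frac{\alpha_j\ln t}{2T_i(t-1)}}\le \Delta_i/2$ for the active $\alpha_j\le\alpha^{*}$ and $t\le n$, so the selection inequality forces either $S_1(t-1)/T_1(t-1)<\mu_1-\sqrt{\frac{\alpha_j\ln t}{2T_1(t-1)}}$ or $S_i(t-1)/T_i(t-1)>\mu_i+\sqrt{\frac{\alpha_j\ln t}{2T_i(t-1)}}$. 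Since every $\alpha_j\ge\alpha_{\min}:=\min_j\alpha_j>2$, these are contained in the corresponding deviation events written with $\alpha_{\min}$, whose probabilities I bound by Hoeffding (Fact~\ref{fact:hoe}) after the usual peeling over the unknown values $T_1(t-1),T_i(t-1)\in\{1,\dots,t\}$; each fixed-count term is at most $t^{-\alpha_{\min}}$, and $\sum_{t\ge1} t\cdot t^{-\alpha_{\min}}$ converges since $\alpha_{\min}>2$. Collecting the pieces yields $\mathbb{E}[T_i(n)]\le \frac{2\alpha^{*}}{\Delta_i^{2}}\ln n + c_i$, hence the claim with $C_{\alpha^{*}}=\sum_{i\ge2}\Delta_i c_i$.

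The main obstacle, and the only place the hierarchy genuinely enters, is the middle step: justifying that the whole bottom layer may be treated as a single UCB process. The subtlety is that the active parameter $\alpha_j=\alpha_{a^R(t)}$ is selected by the upper layers and is thus data-dependent, so the argument must hold uniformly over which bottom expert is active. Monotonicity of the bonus $\sqrt{\alpha\ln t/(2T)}$ in $\alpha$ resolves this on both sides: $\alpha^{*}$ caps the width and pins down the leading term $u_i$, while $\alpha_{\min}>2$ keeps the Hoeffding tail sum finite and $n$-independent. This is also where the contrast with Theorem~\ref{thm:bad} becomes transparent: sharing the global arm statistics and obeying a confidence rule is exactly what a ``bad'' bottom expert would refuse to do, so requiring every bottom expert to be UCB is what removes the linear-in-$R$ dependence, irrespective of how the upper layers route.
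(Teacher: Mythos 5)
Your proposal is correct and takes essentially the same route as the paper's proof: both rest on the key observation that every bottom-layer expert sees the \emph{shared} global arm counts and empirical means, and both then run the standard UCB deviation-plus-counting argument on those global statistics, with the leading $\frac{2\alpha^*}{\Delta_i^2}\ln n$ term coming from capping the active parameter by $\alpha^*$ so that the hierarchy only enters the constant. The only difference is bookkeeping: the paper decomposes $\mathbb{E}[T_i(n)]$ over the $L_R$ bottom experts and applies the argument from the proof of Theorem~\ref{thm:ucb_jg} to each pair $(i,j)$ before re-merging via the shared count (yielding the constant $L_R C_{\alpha^*}+1$), whereas you handle the data-dependent active parameter in one pass by monotonicity of the deviation events in $\alpha$ (bounding them by the $\alpha_{\min}$-events, with $\alpha_{\min}>2$ ensuring a convergent tail sum), which gives a constant free of the $L_R$ factor.
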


\begin{proof}
Recall that $a^k(t)$ denotes the expert selected at layer $k$ at time $t$ and $T_i(m)$ denote the number of times arm $i$ is selected during the first $m$ rounds. Let $J_t$ denote the index of $a^R(t)$, from the proof of Theorem \ref{thm:ucb_jg} we have $\exists \text{ constant } C_{\alpha^*}, \forall 1 \leq i \leq K, \forall 1 \leq j \leq L_R$,
\begin{align}
\mathbb{E}\left[\sum_{t=1}^{n} \mathbb{I}_{I_{t}=i, J_{t}=j}\right] \leq &C_{\alpha^*} + \nonumber\\
&\mathbb{E}\left[\sum_{t=1}^{n} \mathbb{I}_{T_{i}(t-1)<\frac{2 \alpha_{j} \ln t}{\Delta_{i}^{2}}, I_{t}=i, J_{t}=j}\right] \nonumber
\end{align}
So we have
\begin{align}
    \mathbb{E}\left[T_i(n)\right] \leq& \sum_{j=1}^{L_R} \mathbb{E}\left[\sum_{t=1}^{n} \mathbb{I}_{I_{t}=i, J_{t}=j}\right] \nonumber\\
    \leq& L_R C_{\alpha^*} + \nonumber\\
    &\sum_{j=1}^{L_R} \mathbb{E}\left[\sum_{t=1}^{n} \mathbb{I}_{T_{i}(t-1)<\frac{2 \alpha_{j} \ln t}{\Delta_{i}^{2}}, I_{t}=i, J_{t}=j}\right] \nonumber\\
    \leq& L_R C_{\alpha^*} + \mathbb{E}\left[\sum_{t=1}^{n} \mathbb{I}_{T_{i}(t-1)<\frac{2 \alpha^* \ln t}{\Delta_{i}^{2}}, I_{t}=i}\right] \nonumber\\
    \leq& \frac{2 \alpha^* \ln n}{\Delta_{i}^{2}} + L_R C_{\alpha^*} + 1 \nonumber
\end{align}
\end{proof}

Theorem \ref{thm:good_bottom} tells us that if we fix the experts of the bottom layer, adding the number of layers will not make the total regret bound worse. So this bound is suitable for the case that experts at the bottom layer all have small parameters.

However, when $L_R$ is large, there is a great possibility that $\alpha^*$ is also large, which makes the regret bound in Theorem \ref{thm:good_bottom} bad. In this case, we need another kind of regret bound.

\begin{theorem} \label{thm:final}
    Suppose there are $R$ layers of experts with $1<L_1<L_2<\cdots<L_R<K$. $a_j^k$ is $\alpha_j^k$-UCB (Here the $k$ in $\alpha_j^k$ is an index, not the $k$ power of $\alpha_j$), $\forall 1 \leq j \leq L_k, 1 \leq k \leq R$. Top strategy is $\beta$-UCB. Define $\tilde{R}_{n}^{*}= \left(\underset{1 \leq i \leq K}{\max} \sum_{t=1}^{n} X_{i, t}\right)-\sum_{t=1}^{n} X(t)$, i.e., $\mathbb{E}[\tilde{R}_{n}^{*}] = R_{n}^{*}$. \\
    Then the following conclusion holds: $\forall \varepsilon > 0$, $\exists$ constant $M_{\varepsilon}$, if $\alpha_j^k > M_{\varepsilon}, \forall 2 \leq j \leq L_k, 1 \leq k \leq R$ (i.e., only one expert is good in each layer), then $\forall \delta > 0, \exists$ constant $C_{\varepsilon, \delta}$ s.t. with probability $1-\delta$, $\forall n > 0$ we have \\
    If $i^{*} > 2$,
    $$\tilde{R}_n^* \leq \left(\frac{\alpha_1^R}{2 \Delta_{i^{*}-1}^2} \left(\sum_{l=i^{*}}^{K} \Delta_l\right) + \sum_{i=2}^{i^{*}-1} \frac{\alpha_1^R}{2 \Delta_i} + \varepsilon \right) \ln n + C_{\varepsilon, \delta}$$
    Else,
    $$\tilde{R}_{n}^{*} \leq\left(\frac{\underset{k \in \mathcal{S}_{2}}{\sum}(L_k-1)\alpha_1^{k-1}}{2\Delta_{2}^2/\Delta_{K}} +\sum_{i=2}^{K} \frac{\alpha_{1}^R}{2 \Delta_{i}}+\varepsilon\right) \ln n + C_{\varepsilon, \delta}$$
    where
    $$i^* = \min \left\{2 \leq i \leq K: (K-i)\frac{\alpha_1^R}{\Delta_{i}^{2}} - \sum_{l=i+1}^{K}\frac{\alpha_1^R}{\Delta_{l}^{2}} \leq \right.$$
    $$\left.\frac{\sum_{k \in \mathcal{S}_i}(L_k-1)\alpha_1^{k-1}}{\Delta_i^2} \right\}, \quad \alpha_1^{0} = \beta$$
    $$\mathcal{S}_m = \left\{1 \leq k \leq R: (L_k-1)\frac{\alpha_1^{k-1}}{\Delta_m^2} \geq \max\{\right.$$
    $$\left.(L_1-1)\frac{\beta}{\Delta_K^2}, (L_2-1)\frac{\alpha_1^1}{\Delta_K^2}, \cdots, (L_{k-1}-1)\frac{\alpha_1^{k-2}}{\Delta_K^2} \} \right\}$$
\end{theorem}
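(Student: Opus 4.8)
The plan is to reduce the high-probability control of $\tilde{R}_n^*$ to a pathwise bound on the suboptimal pull counts $T_i(n)$, and then to bound $T_i(n)$ by separating the all-good route $a_1^1\to a_1^2\to\cdots\to a_1^R$ from routes that pass through a large-parameter expert. First I would rewrite the regret. Writing $\sum_{t=1}^n X(t)=\sum_{i=1}^K\sum_{s=1}^{T_i(n)}X_{i,s}$ and pairing the unused samples of arm $1$ against the samples drawn from suboptimal arms gives the identity
\[
\tilde{R}_n^* = \Big(\max_i \sum_{s=1}^n X_{i,s}-\sum_{s=1}^n X_{1,s}\Big)+\sum_{s=T_1(n)+1}^n X_{1,s}-\sum_{i\ge 2}\sum_{s=1}^{T_i(n)}X_{i,s}.
\]
By Hoeffding's inequality (Fact \ref{fact:hoe}) and a union bound over arms, with probability $1-\delta$ the first bracket equals $0$ once $n$ exceeds a $\delta$-dependent constant, while the last two sums concentrate around $\sum_{i\ge2}\Delta_i T_i(n)$; by a standard maximal/union argument over the possible values of the random counts, the deviation is of order $\sqrt{\big(\sum_{i\ge2}T_i(n)\big)\ln(1/\delta)}$. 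The key point is that the fluctuation is governed by the number of suboptimal pulls, not by $n$, so once those are shown to be $O(\ln n)$ the deviation is $o(\ln n)$ and is absorbed into the $\varepsilon\ln n$ and $C_{\varepsilon,\delta}$ terms. It therefore suffices to bound $\sum_{i\ge2}\Delta_i T_i(n)$ on a high-probability event.

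Next I would split every suboptimal pull according to the layer at which its route first leaves the good chain. Using the inductive mechanism behind Theorem \ref{thm:ucb_jg} and Theorem \ref{thm:good_bottom}, the pulls of arm $i$ issued by the good leaf $a_1^R$ number at most $\propto \tfrac{\alpha_1^R}{\Delta_i^2}\ln n$ (with the constant as in the statement), producing the $\sum_i \tfrac{\alpha_1^R}{2\Delta_i}\ln n$ contribution common to both cases. Every remaining pull occurs on a round whose route agrees with the good chain up to some layer $k-1$ and then selects a bad child $a_j^k$ with $j\ge2$; let $N_k$ count such rounds, so the non-good-chain rounds total $\sum_{k=1}^R N_k$. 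Since the parent $a_1^{k-1}$ plays $\alpha_1^{k-1}$-UCB and, by the hypothesis $\alpha_j^k>M_\varepsilon$, each bad child is forced to explore enough that its observed mean stays below $\mu_1$ by a definite gap, the optimality of UCB (via Lemma \ref{lm:ucb_ul}) bounds the selections of each bad child and yields $N_k \lesssim (L_k-1)\tfrac{\alpha_1^{k-1}}{g^2}\ln n$ for the relevant gap $g\in\{\Delta_m,\Delta_K\}$. The set $\mathcal{S}_m$ is precisely the collection of layers whose exploration cost dominates all earlier layers, so that $\sum_k N_k$ telescopes onto a few ``record'' layers and stays sub-linear in $R$ rather than accumulating one term per layer.

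Finally I would bound the regret charged to the $\sum_k N_k$ bad-routed rounds and optimize the split. A bad child selected $m$ times pulls a small-gap arm up to $m$ times, because its exploration threshold exceeds $m$, but pulls a large-gap arm only $\propto \tfrac{\alpha_j^k}{\Delta_i^2}\ln n$ times; the index $i^*$ is exactly the crossover between these two regimes, which is why the stated bound splits on whether $i^*>2$. Combining the good-chain term $\sum_i \tfrac{\alpha_1^R}{2\Delta_i}\ln n$ with the bad-routed term $\sum_{k\in\mathcal{S}}(L_k-1)\alpha_1^{k-1}$ (with $\mathcal{S}=\mathcal{S}_{i^*}$ or $\mathcal{S}_2$ according to the case), weighted by the appropriate $\Delta_m,\Delta_K$ factors, produces the two displayed inequalities, and the $\varepsilon\ln n$ slack collects the lower-order UCB correction terms.

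The step I expect to be the main obstacle is the second one: pinning down, uniformly over the recursion, a lower bound on the gap between the good expert's reward and each large-parameter bad expert's reward, since a bad expert's mean is itself determined by its (recursively exploratory) subtree, so the analysis is circular unless the exploration is forced. The hypothesis $\alpha_j^k>M_\varepsilon$ is precisely what breaks this circularity by guaranteeing enough forced exploration to make each gap definite; the delicate part is then threading the resulting $\varepsilon$-slack and the $\delta$-confidence through all $R$ layers simultaneously while keeping the leading coefficient free of an $R$ factor, which is what the maximum inside $\mathcal{S}_m$ is designed to ensure.
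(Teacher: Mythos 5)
Your overall architecture (concentration to reduce the regret to pull counts, then a good-chain versus bad-route decomposition with ``record'' layers) does match the spirit of the paper's proof, and your reading of $\mathcal{S}_m$ as the set of layers whose exploration demand sets a new record is the right one. But there is a genuine gap at the core: your treatment of the bad experts is vacuous. You bound a bad child's pulls of a large-gap arm by $\propto \frac{\alpha_j^k}{\Delta_i^2}\ln n$, but under the hypothesis $\alpha_j^k > M_{\varepsilon}$ these parameters are arbitrarily large, so this bound carries no information. The fact the paper actually exploits is the opposite one: as $\alpha_j^k \to \infty$, an $\alpha_j^k$-UCB degenerates into ``always select the least-pulled child.'' This equalization property does all the work. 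It is why bad-routed rounds flow downward and supply forced exploration to later layers (making non-record layers contribute only constants), and it is the source of $i^*$: in the paper, $i^*$ comes from a budget argument --- if a bad leaf expert pulls arm $i$, then every arm $l>i$ must have at least as many pulls as arm $i$, hence at least $\left(\frac{\alpha_1^R}{2\Delta_i^2}-\varepsilon\right)\ln m$ by Lemma \ref{lm:ucb_ul}, of which $a_1^R$ itself supplies at most $\frac{\alpha_1^R}{2\Delta_l^2}\ln m +1$, so the difference must be paid out of the total bad-route budget $\frac{\sum_{k\in\mathcal{S}}(L_k-1)\alpha_1^{k-1}}{2\Delta^2}\ln m$; $i^*$ is the smallest index for which this is affordable. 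Your ``crossover between two regimes'' description does not produce this inequality, and without equalization you cannot obtain the $i^*>2$ bound, whose leading term $\frac{\alpha_1^R}{2\Delta_{i^*-1}^2}\sum_{l\geq i^*}\Delta_l$ comes precisely from the chain $T_K(n-1)\leq T_{K-1}(n-1)+1\leq\cdots\leq T_{i^*-1}(n-1)+K-i^*+1$.

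The second gap is the obstacle you yourself flag: the circular dependence of the good experts' observed means on their subtrees. Saying that $\alpha_j^k>M_{\varepsilon}$ ``breaks the circularity'' is a statement of intent, not an argument. The paper breaks it with a specific contradiction in the deterministic-arms case: if $\hat{\mu}_{a_1^R}$ stayed bounded away from $\mu_1$, then $a_1^R$'s own UCB property forces its total number of selections to be $O(\ln n)$, so the bad siblings dominate and, being least-pulled selectors, drive all $K$ arm counts up to order $n/K$; this contradicts the UCB inequality that held at the last round where $a_1^R$ pulled a suboptimal arm. This argument, its induction up the $R$ layers, and the device of proving everything first for deterministic arms (Theorem \ref{thm:122}) and only then lifting to stochastic rewards by a Hoeffding/union-bound reduction (so that, on a probability-$(1-\delta)$ event, all empirical means are eventually within $\varepsilon$ of the true means and the deterministic analysis applies verbatim) constitute the real content of the proof. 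Your plan instead threads the $\delta$-confidence through the hierarchical recursion directly --- exactly the step you admit you cannot carry out; the deterministic-first reduction is what makes that threading unnecessary.
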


~

The proof of Theorem \ref{thm:final} is a little complicated and we refer readers to Appendix \ref{app:proof} for its complete proof.

In Theorem \ref{thm:final}, $i^* \geq 2$ satisfies: when $n$ is large, arm $i < i^*$ will only be chosen by $a_1^R$. If $i^* > 2$, from the optimality of $a_1^R$, $T_{i^*-1}(n) \leq O(\frac{1}{\Delta_{i^*-1}} \ln n)$. From the property of UCB, $T_{i}(n)$ is no more than $T_{i^*-1}(n)$ plus a constant when $i \geq i^*$. So the total regret can be bounded by $O(\frac{1}{\Delta_{i^*-1}} \ln n)$. Therefore, the larger $i^*$ is, the better the bound will be.

We can see that when there are less layers or more arms, $i^*$ will become larger, and therefore lead to a lower regret upper bound. Besides, larger parameters at each layer do not influence the bound, which is also an advantage of the bound in Theorem \ref{thm:final}.

Overall, if the parameters in the bottom layer are small, or the layer number is large, Theorem \ref{thm:good_bottom} will give a good bound. If the layer number is small, or the arm set is large, Theorem \ref{thm:final} will give a good bound. In next section, we will see that the bound in Theorem \ref{thm:final} is also a good bound in general case.

\section{Experiments} \label{sc:exp}

In Theorem \ref{thm:final} we require some UCB algorithms have very large parameters. In this secton, we try to remove this requirement and use some experiments to help understand the regret of general hierarchical UCB structure. Also, we will use experiments to show the practical significance of our theoretical results. The implementation details of the experiments can be found in Appendix \ref{app:exp}.

\subsection{Regret change after increasing parameters of UCB strategies} \label{exp1}
We firstly show that in a hierarchical UCB structure, if at each layer all the non-minimal parameters increase to very large (i.e., making the hierarchical UCB structure satisfy the conditions of Theorem \ref{thm:final}), then the total regret will almost certainly become larger. This implies that the regret upper bound in Theorem \ref{thm:final} is also a good bound for hierarchical UCB structure in general, even without the constraint that some parameters are very large.

In this experiment, we consider four kinds of reward distributions: deterministic, Bernoulli, Beta and Binomial (normalized to $[0,1]$). Here ``deterministic'' means the arm returns the same value each time. In each case, we randomly generate the arm set and hierarchical UCB structure. By fixing the total number of rounds and repeatly running processes, we estimate the expectation of the total regret in this setting and denote it $R_1$. Then at each layer, we maintain the minimal parameter and increase other parameters to a very large number. Similarly, we estimate the expectation of the total regret in this setting, denote it $R_2$.

For every kind of distribution, we run independently 1000 processes to see the relationships between $R_1$ and $R_2$. The experimental results are as follows:

\begin{table}[H]
    \centering
    \scalebox{0.82}{
    \begin{tabular}{|c|c|c|c|c|}
        \hline
        distribution & Deter & Bernoulli & Beta & Binomial \\
        \hline
        times of $R_2 \geq R_1$ & 988 & 982 & 981 & 978 \\
        \hline
        times of $R_2 < R_1$ & 12 & 18 & 19 & 22 \\
        \hline
        proportion of $R_2 < R_1$ & 1.2\% & 1.8\% & 1.9\% & 2.2\% \\
        \hline
    \end{tabular}}
    \caption{Comparison of $R_1$ and $R_2$} \label{tb:R_12}
\end{table}

\begin{figure}[hbt]
    \centering
    \subfigure[]{\includegraphics[width=.22\textwidth]{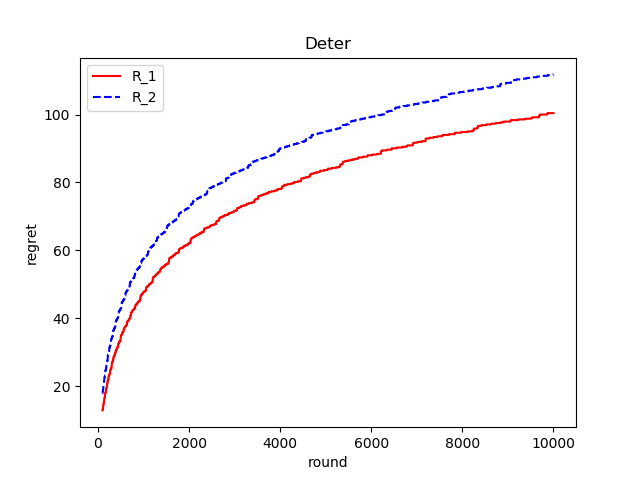}\label{fig:a}}
    \subfigure[]{\includegraphics[width=.22\textwidth]{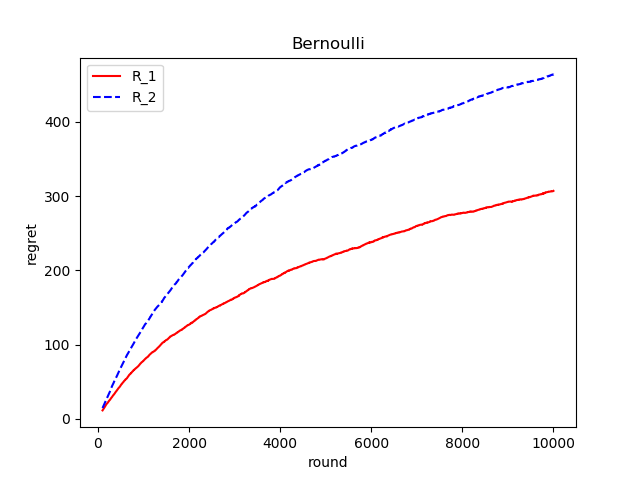}\label{fig:b}}
    \subfigure[]{\includegraphics[width=.22\textwidth]{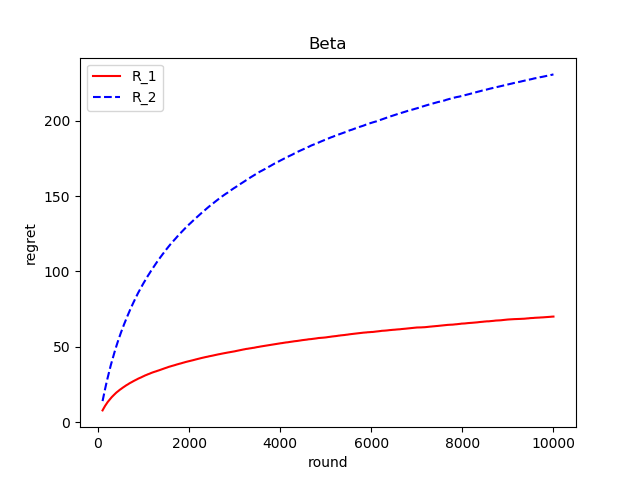}\label{fig:c}}
    \subfigure[]{\includegraphics[width=.22\textwidth]{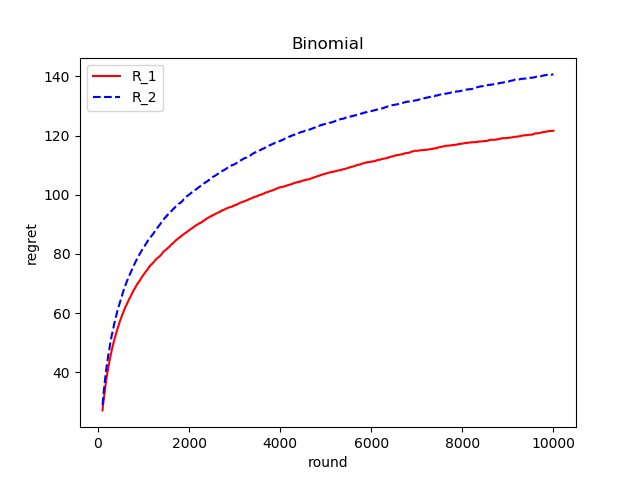}\label{fig:d}}
    \subfigure[]{\includegraphics[width=.22\textwidth]{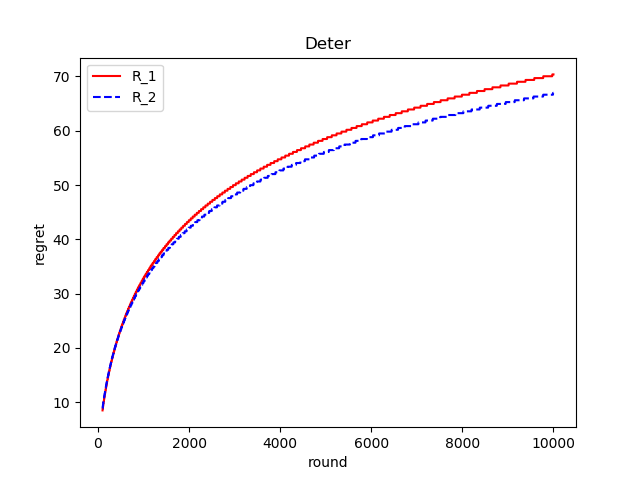}\label{fig:e}}
    \subfigure[]{\includegraphics[width=.22\textwidth]{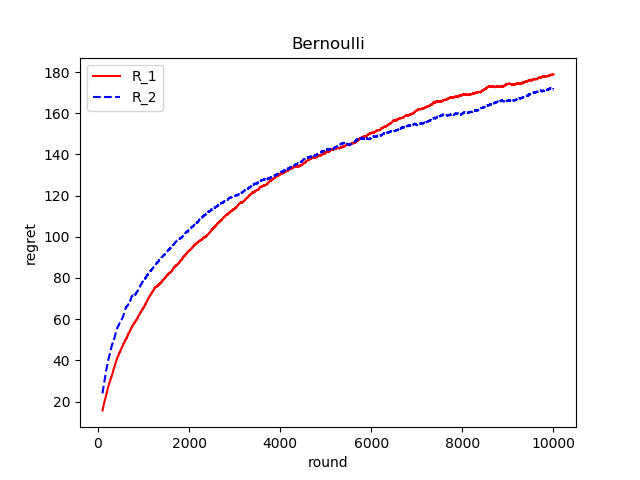}\label{fig:f}}
    \caption{Regret curves of $R_1$ and $R_2$}
\end{figure}

From Table \ref{tb:R_12}, we can see that in most cases we have $R_2 \geq R_1$, implying in most cases Theorem \ref{thm:final} actually gives an upper bound (e.g., as shown in Figure \ref{fig:a} \ref{fig:b} \ref{fig:c} \ref{fig:d}). However, there are still limited cases that we cannot use Theorem \ref{thm:final} to bound the total regret (e.g., as shown in Figure \ref{fig:e} \ref{fig:f}). In these cases, we cannot only consider the effect of each layer's minimal parameter on the total regret and need a more detailed upper bound.

Nevertheless, from the experiments, we can also see that even when $R_1 > R_2$, the gap between them is very small (compared with the cases that $R_1 < R_2$). How to identify these cases and what is the regret upper bound in these cases is one of our future research topics.

\subsection{Selection ranges of different experts in hierarchical UCB structure} \label{subsc:exp2}
In the proof of Theorem \ref{thm:final}, we find that every expert has this property: There are some arms/experts which the expert pulls for no more than constant times. However, the UCB policy in classic bandit problem does not have this property (see detail in Lemma \ref{lm:ucb_ul} in Appendix \ref{app:proof}). Therefore, we also wonder that whether this phenomenon appears in a general UCB hierarchical structure (without the constraints that some parameters are very large).

We ran some hierarchical UCB processes to answer this question and the answer is YES. We show an example under deterministic reward distribution with $R=2, n=10000$. The experimental result is recorded in Table \ref{tb:selection} and the parameters of this example are recorded in Table \ref{tb:exp2}. Note that we have put the experts at each layer in order: In this example, at each layer, expert with smaller index has smaller parameter.

\begin{table}[H]
    \centering
    \begin{tabular}{|c|c|c|c|c|c|}
        \hline
         & $a_1^2$ & $a_2^2$ & $a_3^2$ & $a_4^2$ & $a_5^2$ \\
        \hline
        $a_1^1$ & \textbf{5155} & \textbf{1772} & 9 & 0 & 2 \\
        \hline
        $a_2^1$ & 4 & \textbf{1458} & \textbf{1482} & 3 & 1 \\
        \hline
        $a_3^1$ & 3 & 8 & \textbf{12} & \textbf{33} & \textbf{11} \\
        \hline
        $a_4^1$ & 1 & 4 & 3 & \textbf{14} & \textbf{25} \\
        \hline
    \end{tabular}
\end{table}

\begin{table}[H]
    \centering
    \scalebox{0.82}{
    \begin{tabular}{|c|c|c|c|c|c|c|c|}
        \hline
         & arm $1$ & arm $2$ & arm $3$ & arm $4$ & arm $5$ & arm $6$ & arm $7$ \\
        \hline
        $a_1^2$ & \textbf{4984} & 178 & 0 & 0 & 0 & 1 & 0 \\
        \hline
        $a_2^2$ & \textbf{534} & \textbf{2657} & 25 & 16 & 1 & 7 & 2 \\
        \hline
        $a_3^2$ & 18 & \textbf{1338} & \textbf{81} & \textbf{51} & 11 & 6 & 1 \\
        \hline
        $a_4^2$ & 0 & 1 & 5 & 3 & \textbf{16} & \textbf{19} & 6 \\
        \hline
        $a_5^2$ & 0 & 0 & 1 & 1 & \textbf{11} & \textbf{6} & \textbf{20} \\
        \hline
    \end{tabular}}
    \caption{Record of experts' selection. Each row represents how many times the expert at the front of the row has selected each expert/arm for.} \label{tb:selection}
\end{table}

\begin{table}[H]
    \centering
    \scalebox{0.82}{
    \begin{tabular}{|c|c|c|c|c|c|c|c|}
        \hline
        Value of $i$ & 1 & 2 & 3 & 4 & 5 & 6 & 7 \\
        \hline
        $\beta$ & 5.75 & & & & & & \\
        \hline
        $\alpha_i^1$ & 4.04 & 5.33 & 7.24 & 8.32 & & & \\
        \hline
        $\alpha_i^2$ & 2.33 & 5.22 & 5.27 & 7.29 & 8.41 & & \\
        \hline
        $\mu_i$ & 0.94 & 0.93 & 0.54 & 0.42 & 0.21 & 0.20 & 0.06 \\
        \hline
    \end{tabular}}
    \caption{Parameters of the example} \label{tb:exp2}
\end{table}

To see the result more clearly, we ran $\alpha_1^2$-UCB, $\alpha_2^2$-UCB, $\alpha_3^2$-UCB separately on the same arm set, where the total rounds of $\alpha_i^2$-UCB is same as the number of times $a_i^2$ is selected in the example. We use $b_i^2$ to denote these UCB strategies and show the comparison of their selection results and the selection results of $a_i^2$ in Table \ref{tb:b_i}.

\begin{table}[H]
    \centering
    \scalebox{0.82}{
    \begin{tabular}{|c|c|c|c|c|c|c|c|}
        \hline
         & arm $1$ & arm $2$ & arm $3$ & arm $4$ & arm $5$ & arm $6$ & arm $7$ \\
        \hline
        $a_1^2$ & 4984 & 178 & 0 & 0 & 0 & 1 & 0 \\
        \hline
        $b_1^2$ & 2915 & 2125 & 48 & 30 & 17 & 16 & 12 \\
        \hline
        $a_2^2$ & 534 & 2657 & 25 & 16 & 1 & 7 & 2 \\
        \hline
        $b_2^2$ & 1641 & 1386 & 81 & 53 & 30 & 29 & 22 \\
        \hline
        $a_3^2$ & 18 & 1338 & 81 & 51 & 11 & 6 & 1 \\
        \hline
        $b_3^2$ & 707 & 629 & 61 & 42 & 25 & 24 & 18 \\
        \hline
    \end{tabular}}
    \caption{Selection results of $a_i^2$ and $b_i^2$} \label{tb:b_i}
\end{table}

From these tables, we can see that every expert in the example has a concentration area of selection. For instance, $a_2^2$ mainly selected arm $2$, which is different from its selection when running alone. And the number of times $a_1^2$ selected arm $1$ exceeds a lot than the number when it ran alone. In general, at each layer, experts with smaller indices tend to select experts/arms with smaller indices and experts with larger indices tend to select experts/arms with larger indices. This also consists with the proof of Theorem \ref{thm:final}. It is an interesting phenomenon because it tells us that when a group of experts are making decisions together, good experts will have better performance than usual and bad experts will perform worse.

Overall, different experts in a hierarchical UCB structure has different selection ranges. How to compute these ranges precisely plays an important role in the regret analysis of general hierarchical UCB structure. It is also a direction of our future work.

\subsection{Reasonable number of experts}
In this experiment, we consider the case that $R=1$ and the arm set is known. Our question is: If the parameters of hierarchical UCB structure is randomly drawn from some fixed distribution, which value of $L_1$ minimizes the expected total regret?

This question has some practical significance. For example, if a company needs to recruit some employees to help investing in stocks and the company has some information about the stocks, then the company can use the answer to this question to calculate it should recruit how many employees. For the case that $R \geq 2$, this question is also useful because we can regard the experts in layer $2$ as arms and use the answer to this question to estimate the best value of $L_1$.

The first way to answer this question is using Theorem \ref{thm:final}. For each value of $L_1$, we can calculate the expectation of $\alpha_1^1$ and $\beta$, and then obtain the expected upper bound (ignoring the constant term). When $L_1 = i$, we denote this expected bound as $M_i$. Define $P_1 = \underset{i \geq 1}{\operatorname{argmin}}$ $M_i$. Then $P_1$ is one answer of above question.

The second way to answer the question is using simulation experiment. We can randomly run some processes for each value of $L_1$ and record the average total regret. When $L_1 = i$, we denote this average regret as $N_i$. Define $P_2 = \underset{i \geq 1}{\operatorname{argmin}}$ $ N_i$. Then $P_2$ is also one answer of above question. If we repeat the random processes enough times for each value of $L_1$, we can think $P_2$ close to the correct answer.

Therefore, if for different arm sets, we always have $P_1$ close to $P_2$, then we can think Theorem \ref{thm:final} gives a good estimation of the reasonable value of $L_1$ when arm set is known. To confirm it, we did some experiments for different arm sets and the result is recorded in Table \ref{tb:L_1}. The implement details of this experiment can be found in Appendix \ref{app:exp}.

\begin{table}[H]
    \centering
    \begin{tabular}{|c|c|c|c|c|c|}
        \hline
        $K$ & 100 & 200 & 300 & 400 & 500 \\
        \hline
        $P_1$ & 7 & 14 & 19 & 24 & 26 \\
        \hline
        $P_2$ & 7 & 15 & 21 & 20 & 25 \\
        \hline
        difference & 0 & 1 & 2 & 4 & 1 \\
        \hline
    \end{tabular}
    \caption{Comparison of $P_1$ and $P_2$} \label{tb:L_1}
\end{table}

\begin{figure}[t!]
    \centering
    \includegraphics[width=0.45\textwidth]{./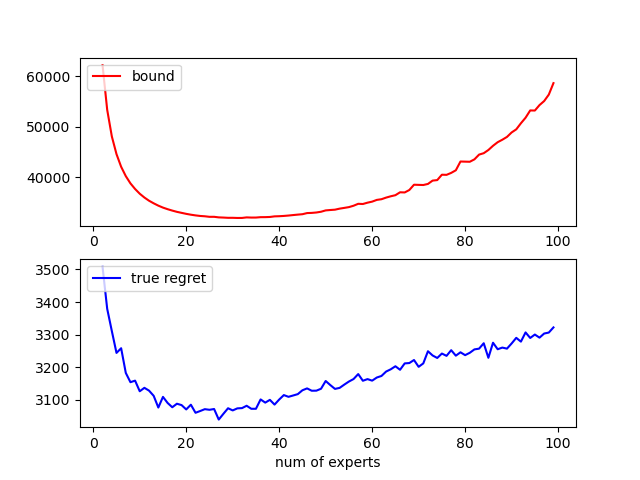}
    \caption{The curve of $M_i$ and $N_i$}
    \label{fig:P_12}
\end{figure}

Note that the best value of $L_1$ is not only related to $K$, but also effected by the expected means of arms. So it is not strange if larger $K$ corresponds to smaller $P_1$ or $P_2$.

Figure \ref{fig:P_12} is the curve of $M_i$ and $N_i$ in one process with $K = 500$ (not one process in Table \ref{tb:L_1}). We can see that the image of $M_i$ and $N_i$ have the same trend, which means the bound in Theorem \ref{thm:final} is reasonable. From Table \ref{tb:L_1} we find that $P_1$ is always close to $P_2$. Therefore, $P_1$ does give a good estimation of the best value of $L_1$ when arm set is known. It means Theorem \ref{thm:final} is an good approximate solution for the question.

\section{Conclusion} \label{sc:con}
In this paper, we consider an extension of standard bandit problem, which is called hierarchical experts bandit problem. We firstly show that we need to make strong assumptions about experts if we want the total regret not to grow linearly with the number of layers. Next, we analyze the hierarchical UCB structure and give two sub-linear regret upper bounds for different cases. Finally, we use some experiments to help analyze the regret of general cases of hierarchical UCB structure, and show that our theoretical results have some guiding significance for reasonable hierarchical decision structure. This can be helpful for applications such as company recruitment.

In our work, the main challenge is that the rewards of experts do not obey a fixed distribution, so we cannot directly utilize the various conclusions about the algorithms in stochastic case. To overcome this challenge, we assume that there is only one good expert in each layer, and then analyze the lower bound of the reward of this good expert and the upper bound of the rewards of other bad experts. In this way, the experts will have some properties similar to arms, and then we can use the various properties of the algorithms.

However, our bounds are still not tight, so there is some room for improvement. Our next step is to improve the upper bounds through more detailed proofs, and try to analyze the regret bounds for general hierarchical UCB structure according to the ideas provided by the experiments, e.g., clarifying the selection ranges of experts in hierarchical UCB structure or identifying the cases where $R_2 < R_1$ (defined in Experiment \ref{exp1}).


\begin{thebibliography}{15}
    \providecommand{\natexlab}[1]{#1}
    
    \bibitem[{Auer, Cesa-Bianchi, and Fischer(2002)}]{auer}
    Auer, P.; Cesa-Bianchi, N.; and Fischer, P. 2002.
    \newblock Finite-time analysis of the multiarmed bandit problem.
    \newblock \emph{Machine learning}, 47(2): 235--256.
    
    \bibitem[{Auer et~al.(2002)Auer, Cesa-Bianchi, Freund, and Schapire}]{exp3}
    Auer, P.; Cesa-Bianchi, N.; Freund, Y.; and Schapire, R.~E. 2002.
    \newblock The nonstochastic multiarmed bandit problem.
    \newblock \emph{SIAM journal on computing}, 32(1): 48--77.
    
    \bibitem[{Billings et~al.(2002)Billings, Davidson, Schaeffer, and
      Szafron}]{poker}
    Billings, D.; Davidson, A.; Schaeffer, J.; and Szafron, D. 2002.
    \newblock The challenge of poker.
    \newblock \emph{Artificial Intelligence}, 134(1-2): 201--240.
    
    \bibitem[{Bubeck and Cesa-Bianchi(2012)}]{sum}
    Bubeck, S.; and Cesa-Bianchi, N. 2012.
    \newblock Regret analysis of stochastic and nonstochastic multi-armed bandit
      problems.
    \newblock \emph{arXiv preprint arXiv:1204.5721}.
    
    \bibitem[{Coquelin and Munos(2007)}]{uct2}
    Coquelin, P.-A.; and Munos, R. 2007.
    \newblock Bandit algorithms for tree search.
    \newblock \emph{arXiv preprint cs/0703062}.
    
    \bibitem[{Garivier and Capp{\'e}(2011)}]{kl-ucb}
    Garivier, A.; and Capp{\'e}, O. 2011.
    \newblock The KL-UCB algorithm for bounded stochastic bandits and beyond.
    \newblock In \emph{Proceedings of the 24th annual conference on learning
      theory}, 359--376. JMLR Workshop and Conference Proceedings.
    
    \bibitem[{Hoeffding(2004)}]{hoeffding}
    Hoeffding, W. 2004.
    \newblock \emph{Probability Inequalities for Sums of Bounded Random Variables}.
    \newblock Encyclopedia of Statistical Sciences.
    
    \bibitem[{Kocsis and Szepesv{\'a}ri(2006)}]{uct}
    Kocsis, L.; and Szepesv{\'a}ri, C. 2006.
    \newblock Bandit based monte-carlo planning.
    \newblock In \emph{European conference on machine learning}, 282--293.
      Springer.
    
    \bibitem[{Lai, Robbins et~al.(1985)}]{lai}
    Lai, T.~L.; Robbins, H.; et~al. 1985.
    \newblock Asymptotically efficient adaptive allocation rules.
    \newblock \emph{Advances in applied mathematics}, 6(1): 4--22.
    
    \bibitem[{McMahan and Streeter(2009)}]{nexp}
    McMahan, H.~B.; and Streeter, M. 2009.
    \newblock Tighter bounds for multi-armed bandits with expert advice.
    
    \bibitem[{Robbins(1952)}]{rob}
    Robbins, H. 1952.
    \newblock Some aspects of the sequential design of experiments.
    \newblock \emph{Bulletin of the American Mathematical Society}, 58(5):
      527--535.
    
    \bibitem[{Sheppard(2002)}]{Scrabble}
    Sheppard, B. 2002.
    \newblock World-championship-caliber Scrabble.
    \newblock \emph{Artificial Intelligence}, 134(1-2): 241--275.
    
    \bibitem[{Tesauro and Galperin(1996)}]{backgamon}
    Tesauro, G.; and Galperin, G. 1996.
    \newblock On-line policy improvement using Monte-Carlo search.
    \newblock \emph{Advances in Neural Information Processing Systems}, 9.
    
    \bibitem[{Thompson(1933{\natexlab{a}})}]{clinical}
    Thompson, W.~R. 1933{\natexlab{a}}.
    \newblock On the Likelihood That One Unknown Probability Exceeds Another in
      View of the Evidence of Two Samples.
    \newblock \emph{Biometrika}, 25(3-4): 285--294.
    
    \bibitem[{Thompson(1933{\natexlab{b}})}]{thompson}
    Thompson, W.~R. 1933{\natexlab{b}}.
    \newblock On the likelihood that one unknown probability exceeds another in
      view of the evidence of two samples.
    \newblock \emph{Biometrika}, 25(3-4): 285--294.
    
\end{thebibliography}

\appendix

\section{Complete proofs for proposed theorems} \label{app:proof}

\subsection{Proof for Theorem \ref{thm:bad}}
\begin{proof}
We first consider the case that $L_k = 3$, $\forall 1 \leq k \leq R$. Layer $k$ is $\{a_1^k, a_2^k, a_3^k\}$, where $a_1^k$ is reasonable and stable. $a_2^k$ and $a_3^k$ are ``bad'' experts, and their strategies follow: When $k < R$, $a_2^k$ and $a_3^k$ always selects $a_3^{k+1}$. $a_2^R$ and $a_3^R$ always selects arm $K$.

We use $T_i(m)$ to denote the number of times arm $i$ is selected during the first $m$ rounds, $\forall 1 \leq i \leq K$. For the selection range (arms/experts) $\{a_1,\cdots,a_k\}$ of $A$, we denote the number of times it selects $a_i$ during the first $m$ rounds by $T_i^A(m)$. Besides, we use $T_j^{k-1}(m)$ to denote the number of times $a_j^k$ is selected during the first $m$ rounds, $\forall 1 \leq k \leq R$. Let $D_{\mu_1,\mu_K} = \frac{1}{\mathrm{kl}\left(\mu_{K}, \mu_{1}\right)}$.

Note that $\forall t > 0$, $\forall 1 \leq k \leq R$, $\mathbb{E}[a^k_{2,t}] = \mathbb{E}[a^k_{3,t}] = \mu_K$. So from the reasonability of $a_1^k$ and $B$, we have $\mathbb{E}\left[T_1^B(n)\right] > \frac{1}{4}n, \mathbb{E}\left[T_1^1(n)\right] > (\frac{1}{4})^2 n, \cdots, \mathbb{E}\left[T_1^{R-1}(n)\right] > (\frac{1}{4})^R n$ when $n$ is large enough.

Then from the stability of $a_1^k$ and $B$ we have $\mathbb{E}\left[T_2^B(n)\right] \geq \frac{D_{\mu_1,\mu_K}}{2} \ln n, \mathbb{E}\left[T_2^1(n)\right] \geq \frac{D_{\mu_1,\mu_K}}{2} \left(\ln n - \ln 4\right), \cdots, \mathbb{E}\left[T_2^{R-1}(n)\right] \geq \frac{D_{\mu_1,\mu_K}}{2} \left(\ln n - (R-1) \ln 4\right)$ when $n$ is large enough.

~

So the total regret
\begin{align}
    R_n^{*} &\geq (\mu_1 - \mu_K) \sum_{k=0}^{R-1} \mathbb{E}\left[T_2^{k}(n)\right]  \nonumber\\
    &\geq \frac{\mu_1 - \mu_K}{2 \mathrm{kl}(\mu_K, \mu_1)} (R \ln n - R^2 \ln 4) \nonumber
\end{align}

This finishes the proof of the case that $L_k = 3$, $\forall 1 \leq k \leq R$.

For the case that some $L_k \geq 4$, we can regard $a_1^k, \cdots, a_{L_k-2}^k$ (the last $2$ experts are ``bad'') as one expert $a^k$. $a^k$ is selected if some $a_l^k$ is selected and a selection is from $a^k$ if it is from some $a_l^k$, $1 \leq l \leq L_k-2$. 

For the selection range $\{b_1,\cdots,b_L\}$ of $a^k$, suppose $\exists 1 \leq i \leq L$ s.t. $\mathbb{E}\left[b_{i,t}\right] = \underset{1 \leq j \leq L}{\min} \mathbb{E}[b_{j,t}], \forall t > 0$. When $a^k$ is chosen for $n > 1$ times, suppose $a_j^k$ is chosen for $n_j$ times, then $\exists C_j$ s.t. the expectation of the number of times $a_j^k$ chooses $b_i$ is no more than $\frac{n_j}{L} + C_j$. So the expectation of the number of times $a^k$ chooses $b_i$ is no more than $\frac{n}{L} + \sum_{j=1}^{L_k-2} C_j$. So $a^k$ is reasonable.

For the selection range $\{b_1,\cdots,b_L\}$ of $a^k$, suppose $\exists \eta_1 > \eta_2$, $1 \leq i \leq L$ s.t. $\underset{1 \leq j \leq L}{\max} \mathbb{E}[b_{j,t}] \leq \eta_1, \forall t > 0$ and $\mathbb{E}[b_{i,t}] \geq \eta_2, \forall t > 0$. When $a^k$ is chosen for $n > 1$ times, $\exists j$ s.t. $a_j^k$ is chosen for no less than $\frac{n}{L_k}$ times. Then $\exists C_j$ s.t. the number of times $b_i$ is chosen is no less than $\frac{1}{\mathrm{kl}\left(\eta_{2}, \eta_{1}\right)} \ln n - \frac{1}{\mathrm{kl}\left(\eta_{2}, \eta_{1}\right)} \ln L_k - C_j$. So $a^k$ is stable.

So $a^k$ is an reasonable and stable strategy and we can get the conclusion by the argument for $L_k = 3, \forall 1 \leq k \leq R$ case.
\end{proof}

\subsection{Proof for Theorem \ref{thm:ucb_jg}}
\begin{proof}
    Recall that $I_t$ denotes the arm selected at time $t$, $T_i(m)$ denotes the number of times arm $i$ is selected during the first $m$ rounds, and $S_i(m)$ denotes the total reward of arm $i$ during the first $m$ rounds. Suppose $J_t = 1$ and $I_t = i \geq 2$, then at least one of the three following inequalitys must be true:
    \begin{numcases}{}
        \frac{S_1(t-1)}{T_1(t-1)} + \sqrt{\frac{\alpha \ln t}{2 T_1(t-1)}} \leq \mu_1 \label{case1}\\
        \frac{S_i(t-1)}{T_i(t-1)} - \sqrt{\frac{\alpha \ln t}{2 T_i(t-1)}} > \mu_i \label{case2}\\
        T_{i}(t-1)<\frac{2 \alpha \ln t}{\Delta_{i}^{2}} \label{case3}
    \end{numcases}
    This is because when the three inequalitys are all false, we have
    \begin{align}
        \frac{S_1(t-1)}{T_1(t-1)} + \sqrt{\frac{\alpha \ln t}{2 T_1(t-1)}} &> \mu_1 \nonumber \left(=\mu_{i}+\Delta_{i}\right) \nonumber\\
        & \geq \mu_{i}+2 \sqrt{\frac{\alpha \ln t}{2 T_i(t-1)}} \nonumber\\
        & \geq \frac{S_i(t-1)}{T_i(t-1)} + \sqrt{\frac{\alpha \ln t}{2 T_i(t-1)}} \nonumber
    \end{align}
    which contradicts with UCB's choice. Hence,
    \begin{align}
        \mathbb{E} \left[T_{i}(n)\right] =& \mathbb{E} \left[\sum_{t=1}^{n} \mathbb{I}_{I_{t}=i}\right] \nonumber\\
        \leq& \sum_{t=1}^{n} \left(\mathbb{P}(\text{(\ref{case1}) is true}) +\mathbb{P}(\text{(\ref{case2}) is true})\right) + \nonumber\\
        &\mathbb{E} \left[\sum_{t=1}^{n} \mathbb{I}_{I_{t}=i, J_t = 1, \text{(\ref{case3}) is true}}\right] + \nonumber\\
        &\mathbb{E} \left[\sum_{t=1}^{n} \mathbb{I}_{I_{t}=i, J_t \neq 1}\right] \nonumber
    \end{align}
    Denote
    $$A_n = \sum_{t=1}^{n} \mathbb{I}_{I_{t}=i, J_t = 1, \text{(\ref{case3}) is true}} \quad B_n = \sum_{t=1}^{n} \mathbb{I}_{I_{t}=i, J_t \neq 1}$$
    Let Condition $i$ (about $m$) be: $C(t,m) \leq \frac{2 \alpha \ln m}{\Delta_i^2} - \frac{2 \alpha \ln t}{\Delta_i^2}$, $\forall 1 \leq t \leq m$. Then according to the conditions in this theorem, $n$ satisfies Condition $i$, $\forall 2 \leq i \leq K$. \\
    Next, we want to prove $A_n + B_n \leq l+1, l = \lceil \frac{2 \alpha \ln n}{\Delta_i^2} \rceil$ by induction on $l$. \\
    When $l=0$, we must have $n=1$. So $A_n+B_n \leq n \leq 1$. \\
    Suppose $\forall l \leq k-1, A_n+B_n \leq l+1$. \\
    For $l=k$, let $n_k = \underset{t=1, \ldots, n}{\operatorname{argmax}}\left\{\frac{2 \alpha \ln t}{\Delta_i^2} \leq k-1\right\}$, $t_n = \underset{t=1, \ldots, n}{\operatorname{argmax}}\left\{J_t \neq 1\right\}$. \\
    If $t_n \leq n_k$, changing $J_{t_n}$ to $1$ makes $n_k$ satisfies Condition $i$: After changing $J_{t_n}$ to $1$, $C(t,n_k) \leq C(t,n) \leq \operatorname{max}\left\{0,\frac{2 \alpha \ln n}{\Delta_i^2} - \frac{2 \alpha \ln t}{\Delta_i^2} -1\right\} \leq \frac{2 \alpha \ln n_k}{\Delta_i^2} - \frac{2 \alpha \ln t}{\Delta_i^2}$. By inductive hypothesis, $A_{n_k} + B_{n_k} = x \leq k+1$. From definition, $A_n - A_{n_k} \leq max\{0,k-x\}, B_n - B_{n_k} = 0$. So $A_n + B_n \leq k+1$. \\
    If $t_n > n_k$, then $C(t,n_k) \leq \operatorname{max}\{0,C(t,n)-1\}$ and $n_k$ satisfies Condition $i$. $A_{n_k} + B_{n_k} = x \leq k, A_n - A_{n_k} \leq k-x, B_n - B_{n_k} \leq C(n_k,n) \leq 1$. So $A_n + B_n \leq k+1$. \\
    This finishes the induction and we derive that $A_n + B_n \leq \frac{2 \alpha \ln n}{\Delta_i^2}+2$. \\
    Finally, we prove $\sum_{t=1}^{n} \left(\mathbb{P}(\text{(\ref{case1}) is true}) +\mathbb{P}(\text{(\ref{case2}) is true})\right)$ can be bounded by constant using Hoeffding's inequality: $\forall t$,
    \begin{align}
    \mathbb{P}(\text{(\ref{case1}) is true}) &\leq \sum_{s=1}^{t} \mathbb{P}\left(\frac{\sum_{k=1}^{s} X_{1,k}}{s}+\sqrt{\frac{\alpha \ln t}{2 s}} \leq \mu_1 \right) \nonumber\\
    &\leq \sum_{s=1}^{t} t^{-\alpha} \leq t^{-\alpha+1} \nonumber
    \end{align}
    So $\sum_{t=1}^{n} \mathbb{P}(\text{(\ref{case1}) is true}) \leq \sum_{t=1}^{n} t^{-\alpha+1} \leq C_{\alpha}$ when $\alpha > 2$. \\ Similarly, $\sum_{t=1}^{n} \mathbb{P}(\text{(\ref{case2}) is true}) \leq C_{\alpha}$. \\
    Overall, we have
    $$\mathbb{E} \left[T_{i}(n)\right] \leq \frac{2 \alpha \ln n}{\Delta_i^2} + 2 C_{\alpha} + 2$$
\end{proof}

\subsection{Proof for Theorem \ref{thm:final}}
We firstly assume arms are deterministic (return the same value each time) and prove a stronger version of Theorem \ref{thm:final}:

\begin{theorem} \label{thm:122}
    Suppose there are $R$ layers of experts with $1<L_1<L_2<\cdots<L_R<K$. $a_j^k$ is $\alpha_j^k$-UCB (Here the $k$ in $\alpha_j^k$ is an index, not the $k$ power of $\alpha_j$), $\forall 1 \leq j \leq L_k, 1 \leq k \leq R$. Top strategy is $\beta$-UCB. Arms are deterministic. \\
    Then the following conclusion holds: $\forall \varepsilon > 0$, $\exists$ constant $M_{\varepsilon}$, if $\alpha_j^k > M_{\varepsilon}, \forall 2 \leq j \leq L_k, 1 \leq k \leq R$, then $\exists$ constant $C_{\varepsilon}$ s.t. $\forall n > 0$ we have \\
    If $i^{*} > 2$,
    $$R_n^* \leq \left(\frac{\alpha_1^R}{2 \Delta_{i^{*}-1}^2} \left(\sum_{l=i^{*}}^{K} \Delta_l\right) + \sum_{i=2}^{i^{*}-1} \frac{\alpha_1^R}{2 \Delta_i} + \varepsilon \right) \ln n + C_{\varepsilon}$$
    Else,
    $$R_{n}^{*} \leq\left(\frac{\sum_{k \in \mathcal{S}_{2}}(L_k-1)\alpha_1^{k-1}}{2\Delta_{2}^2/\Delta_{K}} +\sum_{i=2}^{K} \frac{\alpha_{1}^R}{2 \Delta_{i}}+\varepsilon\right) \ln n + C_{\varepsilon}$$
    where
    $$i^* = \min \left\{2 \leq i \leq K: (K-i)\frac{\alpha_1^R}{\Delta_{i}^{2}} - \sum_{l=i+1}^{K}\frac{\alpha_1^R}{\Delta_{l}^{2}} \leq \right.$$
    $$\left.\frac{\sum_{k \in \mathcal{S}_i}(L_k-1)\alpha_1^{k-1}}{\Delta_i^2} \right\}, \quad \alpha_1^{0} = \beta$$
    $$\mathcal{S}_m = \left\{1 \leq k \leq R: (L_k-1)\frac{\alpha_1^{k-1}}{\Delta_m^2} \geq \max\{\right.$$
    $$\left.(L_1-1)\frac{\beta}{\Delta_K^2}, (L_2-1)\frac{\alpha_1^1}{\Delta_K^2}, \cdots, (L_{k-1}-1)\frac{\alpha_1^{k-2}}{\Delta_K^2} \} \right\}$$
\end{theorem}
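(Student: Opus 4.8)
The plan is to exploit the deterministic-arm assumption to turn each UCB expert into an (almost) deterministic process, and then to bound the total number of pulls $T_i(n)$ of each suboptimal arm $i$ by separating two disjoint sources of pulls: those routed through the ``good chain'' $B \to a_1^1 \to \cdots \to a_1^R$, and those that leak through at least one bad expert. Since the arms are deterministic, after a single pull the empirical mean of arm $i$ equals $\mu_i$ exactly, so the UCB index of arm $i$ at local round $t$ is exactly $\mu_i + \sqrt{\alpha \ln t / (2 T_i)}$. Comparing with arm $1$ (index $\geq \mu_1$) shows that the good bottom expert $a_1^R$ pulls arm $i$ at most $\frac{\alpha_1^R \ln n}{2\Delta_i^2} + O(1)$ times, a factor-$4$ improvement over Fact \ref{fact:ucb} that is exactly what produces the coefficient $\frac{\alpha_1^R}{2\Delta_i}$ appearing in the statement. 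This is the deterministic refinement of Lemma \ref{lm:ucb_ul} and of the argument inside Theorem \ref{thm:ucb_jg}, and I would first state and prove it as a standalone claim.

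Next I would run a bottom-up induction over the layers to control the bad experts. The inductive hypothesis at layer $k$ is twofold: (i) the good expert $a_1^k$ forwards almost all of its selections down the good chain, so its average reward converges to $\mu_1$; and (ii) any bad expert $a_j^k$ with $j \geq 2$ and parameter $\alpha_j^k > M_\varepsilon$ explores so heavily that its average reward stays bounded away from $\mu_1$ by a margin $g_k$ that I can make arbitrarily close to a fixed gap by taking $M_\varepsilon$ large. Consequently the parent $a_1^{k-1}$, itself a UCB over layer $k$, selects each bad child at most $\frac{\alpha_1^{k-1}\ln n}{2 g_k^2}+O(1)$ times; the choice of $M_\varepsilon$ is precisely what pins down $g_k$ and forces the single-good-expert-per-layer picture. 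The role of $\varepsilon \ln n$ in the final bound is to absorb the discrepancy between these realized margins and the idealized gaps $\Delta_m, \Delta_K$ used in the statement.

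With both sources bounded, I would assemble $T_i(n)$. Each arm pull corresponds to a unique root-to-leaf path, and I attribute a leaked pull to the shallowest bad node on its path, say in layer $k$, so that everything above it is good and the attribution is injective. The number of such pulls reaching arm $i$ is then at most the total number of times bad experts in layer $k$ are selected, namely $\leq (L_k-1)\frac{\alpha_1^{k-1}\ln n}{2 g_k^2}$. Summing the good-chain bound $\frac{\alpha_1^R\ln n}{2\Delta_i^2}$ and the leakage bounds over the layers, and noting that a prefix-maximum comparison (the deeper layer measured at gap $\Delta_m$ and the shallower ones at the worst gap $\Delta_K$) selects the dominant contributing layers, reproduces the set $\mathcal{S}_m$. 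The index $i^*$ is then the crossover where the leakage budget $\frac{\sum_{k\in\mathcal{S}_i}(L_k-1)\alpha_1^{k-1}}{\Delta_i^2}$ overtakes the over-count incurred by capping the tail arms $i,\dots,K$ at arm $(i^*-1)$'s rate; for $i<i^*$ I use the direct coefficient $\frac{\alpha_1^R}{2\Delta_i}$, and for $i \geq i^*$ the capped coefficient $\frac{\alpha_1^R}{2\Delta_{i^*-1}^2}$, which yields the first displayed bound. The degenerate case $i^*=2$ avoids dividing by $\Delta_{i^*-1}=\Delta_1=0$ by bounding the leakage count with the smallest gap $\Delta_2$ and weighting each leaked pull by the largest regret $\Delta_K$, giving the second bound.

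I expect the combinatorial bookkeeping of the two pulling sources to be the main obstacle: specifically, verifying that the injective attribution of leaked pulls together with the heavy exploration of bad experts yields exactly the denominators $\Delta_m^2$ and $\Delta_K^2$ appearing in $\mathcal{S}_m$, and that the crossover inequality defining $i^*$ is the correct point at which the capped tail bound becomes tighter than the summed direct bounds. Establishing the uniform-in-$n$ constant $C_\varepsilon$, so that the bound holds for every $n>0$ rather than only asymptotically, also requires checking that all $O(1)$ terms come solely from the initialization pulls and from the UCB confidence-failure sums, which are finite because $\alpha > 2$.
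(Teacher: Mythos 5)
Your proposal follows essentially the same route as the paper's proof of Theorem \ref{thm:122}: the deterministic pull-count lemma (the paper's Lemma \ref{lm:ucb_ul}, including your factor-4 refinement and the matching lower bound), the layer-by-layer separation of the good chain from large-parameter bad experts, the leakage count with the prefix-maximum suppression producing $\mathcal{S}_m$, and the water-filling crossover defining $i^*$ with the capped tail bound $T_l \leq T_{i^*-1}+O(1)$ and the $\Delta_K$-weighted degenerate case $i^*=2$. The two points you flag as obstacles are exactly where the paper works hardest: it reduces each large-$\alpha$ bad expert to an exact least-pulled selector (for fixed $n$), which is what makes both the suppression behind $\mathcal{S}_m$ and the tail cap rigorous, and it bootstraps the exclusion of arms (first arms below $i_2$ via $\mathcal{S}_2$, then below $i_3$, and so on) to reach the self-referential index $i^*$ defined through $\mathcal{S}_{i^*}$.
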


~

To prove this theorem, we firstly propose a lemma.

\begin{lemma} \label{lm:ucb_ul}
    If an $\alpha$-UCB controls an deterministic arm set, then (1) $\forall i \geq 2$, $T_i(n) \leq \frac{\alpha}{2 \Delta_{i}^{2}} \ln n+1$. (2) $\forall \varepsilon > 0$, $T_{i}(n) \geq \frac{\alpha}{2\left(\Delta_{i}+\varepsilon\right)^{2}} \ln n$ when $n$ is large enough.
\end{lemma}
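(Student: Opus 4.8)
The plan is to exploit the fact that the arms are deterministic, which strips all stochasticity out of the UCB indices and makes the lemma a pure comparison of explicit quantities. Since arm $i$ returns the value $\mu_i$ on every pull, once it has been played at least once we have $S_i(t-1)/T_i(t-1) = \mu_i$ exactly, so the index of arm $i$ at round $t$ is simply $\mu_i + \sqrt{\alpha \ln t/(2 T_i(t-1))}$. In particular, unlike in the stochastic analyses, no appeal to Hoeffding's inequality (Fact \ref{fact:hoe}) is needed here; the entire argument is deterministic.

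For part (1), I would argue that whenever a suboptimal arm $i \geq 2$ is selected at time $t$, its index is at least that of arm $1$, and since arm $1$'s exploration bonus is nonnegative,
$$\mu_i + \sqrt{\frac{\alpha \ln t}{2 T_i(t-1)}} \geq \mu_1 + \sqrt{\frac{\alpha \ln t}{2 T_1(t-1)}} \geq \mu_1.$$
Rearranging gives $T_i(t-1) \leq \frac{\alpha \ln t}{2\Delta_i^2} \leq \frac{\alpha \ln n}{2\Delta_i^2}$. Applying this inequality at the \emph{last} round at which arm $i$ is pulled, and accounting for the single increment from that final pull, yields $T_i(n) \leq \frac{\alpha}{2\Delta_i^2}\ln n + 1$, which is exactly claim (1).

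For part (2), I would first feed claim (1) back in: summing over suboptimal arms gives $\sum_{j\geq 2} T_j(n) = O(\ln n)$, so arm $1$ is played at least $n - O(\ln n)$ times. Letting $t_1$ be the last round at which arm $1$ is chosen, this forces $t_1 \geq n - O(\ln n)$ and $T_1(t_1-1) \geq n - O(\ln n)$, so arm $1$'s exploration bonus $\sqrt{\alpha \ln t_1/(2 T_1(t_1-1))}$ tends to $0$. Since arm $1$ is selected at $t_1$, its index beats arm $i$'s, and after discarding the vanishing bonus this reads $\sqrt{\alpha \ln t_1/(2 T_i(t_1-1))} \leq \Delta_i + o(1)$, whence $T_i(n) \geq T_i(t_1-1) \geq \frac{\alpha \ln t_1}{2(\Delta_i + o(1))^2}$.

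The main obstacle, and the only step needing genuine care, is converting this into the stated form $\frac{\alpha}{2(\Delta_i+\varepsilon)^2}\ln n$: the bound I obtain carries $\ln t_1$ rather than $\ln n$ and a shrinking slack rather than the fixed $\varepsilon$. I would fix an intermediate $\varepsilon' < \varepsilon$, require $n$ large enough that arm $1$'s bonus is below $\varepsilon'$, and then observe that $\ln t_1/\ln n \to 1$ (because $t_1 = n - O(\ln n)$) while $(\Delta_i+\varepsilon)^2/(\Delta_i+\varepsilon')^2$ is a fixed constant exceeding $1$; the product therefore surpasses $1$ for all sufficiently large $n$, which is precisely the ``$n$ large enough'' slack the statement permits. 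This bookkeeping of the intermediate slack and the quantification of ``large enough'' is the delicate part; everything else is a direct comparison of deterministic indices.
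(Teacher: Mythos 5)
Your proof is correct. Part (1) is essentially the paper's own argument: compare the index of arm $i$ at its final pull with arm $1$'s index, drop arm $1$'s nonnegative bonus, and rearrange. Part (2), however, takes a genuinely different and noticeably simpler route. The paper's proof spends most of its effort establishing a structural fact: for $n$ large enough, consecutive pulls of arm $1$ are at most $3K$ rounds apart (proved by a somewhat delicate contradiction involving an arm that gets pulled three times in a short window while arm $1$ stays idle). It then locates a pull of arm $1$ at some round $n-k$ with $k \leq 3K$, applies the index comparison there to get $T_i(n-k) \geq \frac{\alpha}{2(\Delta_i+\varepsilon)^2}\ln(n-k)$, and converts $\ln(n-k)$ to $\ln n$ by enlarging $\varepsilon$ to $2\varepsilon$. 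You avoid the bounded-gap lemma entirely: feeding part (1) back in gives $T_1(n) \geq n - O(\ln n)$, so the \emph{last} pull $t_1$ of arm $1$ satisfies $t_1 \geq n - O(\ln n)$ and $T_1(t_1-1) \geq n - O(\ln n)$; the index comparison at round $t_1$ then yields $T_i(n) \geq T_i(t_1-1) \geq \frac{\alpha \ln t_1}{2(\Delta_i+\varepsilon')^2}$, and since $\ln t_1/\ln n \to 1$ while $(\Delta_i+\varepsilon)^2/(\Delta_i+\varepsilon')^2$ is a constant exceeding $1$, the stated bound follows for large $n$. The key insight is that a recent pull of arm $1$ within a \emph{constant} window (what the paper proves) is stronger than necessary; a pull within an $O(\ln n)$ window is exactly enough because only the ratio of logarithms matters. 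Your bookkeeping with the intermediate $\varepsilon'$ is sound. The trade-off is that the paper's $3K$-gap property is a stronger standalone statement about UCB's trajectory, but it is not invoked anywhere outside this lemma's proof, so your shortcut loses nothing for the paper's purposes. One pedantic point common to both proofs: the index comparison is only valid after the initialization rounds $t \leq K$, which your requirement that $n$ (hence $t_1$) be large handles automatically.
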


\begin{proof}
$\forall i \geq 1$, Let $U_i(n)=\mu_i+\sqrt{\frac{\alpha \ln n}{2 T_i(n-1)}}$. \\
Suppose $\alpha$-UCB selects arm $i$ for the $k$th time at round $t_k^i$. Then
$$\mu_{i}+\sqrt{\frac{\alpha \ln t_{k+1}^{i}}{2 k}} \geq \mu_1+\sqrt{\frac{\alpha \ln t_{k+1}^{i}}{2 T_1(t_{k+1}^{i}-1)}}>\mu_{1}$$
$$\Rightarrow \ln t_{k+1}^{i} > \frac{2 \Delta_{i}^{2}}{\alpha} k$$
So $T_{i}(n) \leq k$ when $\ln n \leq \frac{2 \Delta_{i}^{2}}{\alpha} k$. \\
So $$T_i(n) \leq \lceil \frac{\alpha}{2 \Delta_{i}^{2}} \ln n \rceil \leq \frac{\alpha}{2 \Delta_{i}^{2}} \ln n+1$$
So $$\frac{\ln n}{ T_{1}(n)} \rightarrow 0, U_{1}(n) \rightarrow \mu_{1}, n \rightarrow \infty$$

We then prove that when $k$ is large enough, $t_{k+1}^1 - t_k^1 \leq 3K$. \\
Suppose arm $1$ is pulled at round $n$ and has not been pulled between round $n+1$ and $n+3K$. Then $\exists i \geq 2$ s.t. arm $i$ is successively pulled at round $n+k_1$, $n+k_2$, $n+k_3$ with $T_i(n+k_1-1) = T_i(n)$ and $k_3 \leq 3K$. Then
\begin{numcases}{}
    \frac{\Delta_{i}}{\sqrt{\alpha/2}} \geq \sqrt{\frac{\ln n}{T_{i}(n-1)}}-\sqrt{\frac{\ln n}{T_1(n-1)}} \nonumber\\
    \frac{\Delta_{i}}{\sqrt{\alpha/2}} \leq \sqrt{\frac{\ln (n+k_3)}{T_{i}(n-1)+2}}-\sqrt{\frac{\ln (n+k_3)}{T_1(n-1)+1}} \nonumber
\end{numcases}
Because
$$\sqrt{\frac{\ln (n+k_3)}{T_{i}(n-1)+1}}-\sqrt{\frac{\ln (n+k_3)}{T_1(n-1)}} \geq $$
$$\sqrt{\frac{\ln (n+k_3)}{T_{i}(n-1)+2}}-\sqrt{\frac{\ln (n+k_3)}{T_1(n-1)+1}}$$
we have
$$\sqrt{\frac{\ln (n+k_3)}{T_{i}(n-1)+1}}-\sqrt{\frac{\ln (n+k_3)}{T_1(n-1)}} \geq $$
$$\sqrt{\frac{\ln n}{T_{i}(n-1)}}-\sqrt{\frac{\ln n}{T_1(n-1)}}$$
$$\Rightarrow \sqrt{\frac{\ln (n+k_3)}{T_{i}(n-1)+1}} \geq \sqrt{\frac{\ln n}{T_{i}(n-1)}}$$
$$\Rightarrow \frac{\ln (n+k_3)}{\ln n} \geq \frac{T_{i}(n-1)+1}{T_{i}(n-1)}$$
\begin{align}
\Rightarrow n+k_3 \geq n^{1+\frac{1}{T_{i}(n-1)}} &\geq n + \frac{n}{T_{i}(n-1)} \nonumber\\
&\geq n + \frac{n}{\frac{\alpha}{2 \Delta_{i}^{2}} \ln n+1} \nonumber
\end{align}
When $n$ is large enough, we have $k_3 > 3K$. It is a contradiction. \\

$\forall \varepsilon > 0$, $\exists N$ s.t. $U_{1}(n)<\mu_1+\varepsilon, \forall n > N$.
When $\alpha$-UCB pulls arm $1$ at round $n > N$, 
$$\forall i \geq 2, \quad \mu_1 + \varepsilon > U_1(n) \geq U_i(n) = \mu_i + \sqrt{\frac{\alpha \ln n}{2 T_i(n-1)}}$$
$$\Rightarrow T_i(n) \geq \frac{\alpha}{2\left(\Delta_{i}+\varepsilon\right)^{2}} \ln n$$
When $N$ is large enough, $\forall n > N+3K$, $\exists k \leq 3K$ s.t. arm $1$ is pulled at round $n-k$, and
\begin{align}
T_i(n) \geq T_i(n-k) &\geq \frac{\alpha}{2\left(\Delta_{i}+\varepsilon\right)^{2}} \ln (n-k) \nonumber\\
&\geq \frac{\alpha}{2\left(\Delta_{i}+2\varepsilon\right)^{2}} \ln n \nonumber
\end{align}
\end{proof}



Then we can prove Theorem \ref{thm:122}.

\begin{proof}
    Denote the top stragy as $B$. Recall that $T_i(m)$ denote the number of times arm $i$ is selected during the first $m$ rounds, and $T_j^{k-1}(m)$ denotes the number of times $a_j^k$ is selected during the first $m$ rounds. We use $\hat{\mu}_{a_j^k}(m)$ to denote the average reward of $a_j^k$ during the first $m$ rounds. \\
    For fixed $n$, when $\alpha_j^k$ is large enough, $a_j^k$ will always select the expert/arm with the least number of pulls at time $m < n$. We firstly assume that $\forall 1 \leq k \leq R, \forall j \geq 2, a_j^k$ always selects the arm with the least number of pulls. \\
    For simplicity, we use $a_j$ to denote $a_j^R$ and $\alpha_j$ to denote $\alpha_j^R$ in the following. \\
    
    We first prove that $\forall \varepsilon > 0, \hat{\mu}_{a_1}(n-1) > \mu_1 - \varepsilon$ when $n$ is large enough. \\
    $\forall \varepsilon > 0, \exists q > 0$ s.t. $\mu_1 - \varepsilon = \frac{q \mu_{1}+\mu_{K}}{q+1}$. $\forall N \in \mathbb{Z}^{+}$, suppose $\hat{\mu}_{a_1}(N-1) \leq \frac{q \mu_{1}+\mu_{K}}{q+1}$. Suppose $a_1$ selects arm $i$ for the last time before round $N$ at round $M_i < N$ (If $M_i$ does not exist, let $M_i = 0$). Let $M = \max \{M_2, \cdots, M_K\}$. \\
    For $i \geq 2$ with $M_i > 0$,
    $$\mu_{i}+\sqrt{\frac{\alpha_{1} \ln M_i}{2 T_{i}(M_{i}-1)}} \geq \mu_1 \Rightarrow T_{i}(M_{i}-1) \leq \frac{\alpha_1}{2 \Delta_i^2} \ln M_i$$
    So $$\sum_{i=2}^{K} T_i^{a_1}(M-1) \leq \frac{1}{q+1} C_{\alpha_1} \ln M, C_{\alpha_1} = (q+1) \sum_{i=2}^{K} \frac{\alpha_1}{2 \Delta_i^2}$$
    $$\Rightarrow T_1^{R-1}(M-1) \leq C_{\alpha_1} \ln M,$$
    $$\sum_{j=2}^{L_R} T_j^{R-1}(M-1) \geq M-1- C_{\alpha_1} \ln M$$
    Suppose $M = M_{j_1}$, if $\frac{M-1}{2K} > C_{\alpha_1} \ln M$, we consider the arm with the largest number of pulls before round $M$ and call it arm $i_1$. Then $T_{i_1}(M-1) \geq \frac{M-1}{K}$. When arm $i_1$ is pulled by some $a_h$ with $h \geq 2$ for the last time before round $M$ at round $m_1$, $T_{i_1}(m_1-1) \geq \frac{M-1}{K} - C_{\alpha_1} \ln M > C_{\alpha_1} \ln M$. \\
    Because $a_h$ always selects the arm with fewest pulls, $T_{j_1}(M-1) \geq T_{j_1}(m_1-1) > C_{\alpha_1} \ln M$. But from the definition of $j_1$ we have $T_{j_1}(M-1) \leq \frac{\alpha_{1}}{2 \Delta_{j_1}^{2}} \ln M < C_{\alpha_1} \ln M$, which is a contradiction. \\
    So $\frac{M-1}{2K} \leq C_{\alpha_1} \ln M$, which means $M$ is no more than a constant $C_1$. \\
    Hence $T_1^{R-1}(N-1) \leq (q+1) C_1$. However, from Lemma \ref{lm:ucb_ul} we know $T_1^{R-1}(N-1) \geq \frac{\alpha_1^{R-1}}{3 \Delta_K^2} \ln(N-1)$ when $N$ is large enough. Let $N \rightarrow \infty$ and we get a contradiction. \\
    So when $n$ is large enough, we have $\hat{\mu}_{a_1}(n-1) > \mu_1 - \varepsilon$. \\
    
    From above argument we know that $\forall \varepsilon > 0, \exists N \in \mathbb{Z}^{+}$ s.t. $\forall m \geq N, \hat{\mu}_{a_1}(m-1) > \mu_1 - \varepsilon$. \\
    We next prove $\forall j \geq 2, T_{j}^{R-1}(n-1) \leq T_{1}^{R-1}(n-1) + 1$ when $n$ is large enough. \\
    Let $h=\underset{1 \leq j \leq L_{R}}{\operatorname{argmax}}$ $T_{j}^{R-1}(N-1)$. Suppose $h \geq 2$ and $T_{h}^{R-1}(n-1) > T_{1}^{R-1}(n-1) + 1$. \\
    If $\hat{\mu}_{a_h}(n-1) \leq \mu_1 - 2 \varepsilon$, \\
    $\exists D$ s.t. $\forall 1 \leq l \leq L_R, \forall m > 0$, if $T_{l}^{R-1}(m-1) > D$ and $a_l$ is pulled at round $m$, then $\left| \hat{\mu}_{a_{l}}(m-1) - \hat{\mu}_{a_{l}}(m) \right| < \frac{\varepsilon}{2}$. \\
    We assume $n$ s.t. $T_{h}^{R-1}(n-1) > \max\{D,N\} + 1$. Consider the last pull of $a_{h}$ before round $n$ at round $n^{\prime}$. Then $T_{h}^{R-1}\left(n^{\prime}-1\right) \geq T_{1}^{R-1}\left(n^{\prime}-1\right)$ and $\hat{\mu}_{a_{h}}(n'-1)<\hat{\mu}_{a_{1}}\left(n^{\prime}-1\right)-\frac{\varepsilon}{2}$. It contradicts the pull of $a_h$. \\
    So $\hat{\mu}_{a_h}(n-1) > \mu_1 - 2 \varepsilon$ when $n$ is large enough. Note that $T_{h}^{R-1}(n-1) \geq \frac{n-1}{L_R}$. \\
    Because $\hat{\mu}_{a_h}(n-1) > \mu_1 - 2 \varepsilon$, $T_1^{a_h}(n-1) > \left(1-\frac{2\varepsilon}{\Delta_2}\right)T_{h}^{R-1}(n-1) \geq \left(1-\frac{2\varepsilon}{\Delta_2}\right) \frac{n-1}{L_R}$. \\
    Note that $L_R < K$, when $\varepsilon << 1$, $T_1^{a_h}(n-1) > \frac{n-1}{K}+1$. From the property of $a_h$ (always selecting the expert/arm with the least number of pulls) we have $\sum_{i=1}^{K} T_i(n-1) \geq K \left(T_1^{a_h}(n-1)-1\right) > n-1$, which is a contradiction. \\
    So $\forall j \geq 2, T_{j}^{R-1}(n-1) \leq T_{1}^{R-1}(n-1) + 1$ when $n >> 1$, $T_{1}^{R-1}(n-1) \geq \frac{n-1}{L} -1$. \\
    Because $\forall i \geq 2, T_i^{a_1}(n-1) \leq \frac{\alpha_1}{2 \Delta_{i}^{2}} \ln n+1$, $T_1(n-1) > \frac{n-1}{K-1/2}$ when $n >> 1$. \\
    Let $b = \frac{K}{K-1/3} > 1$, from the property of $a_h$ with $h \geq 2$, $a_j$ will not select arm $1$ between round $n$ and $\lceil b n \rceil$ if $j \geq 2$. \\
    From the same inference we get $a_j$ will not select arm $1$ between round $\lceil b n \rceil$ and $\lceil b^2 n \rceil$ if $j \geq 2$. Repeat the process and we get $a_j$ will never select arm $1$ after round $n$ if $j \geq 2$. \\
    Hence, $\forall \varepsilon > 0, \exists N \in \mathbb{Z}^{+}$ s.t. $\forall m \geq N, \hat{\mu}_{a_1}(m-1) > \mu_1 - \varepsilon, \hat{\mu}_{a_j}(m-1) < \mu_2 + \varepsilon, \forall j \geq 2$. \\
    
    Then from similar inference we get $\forall \varepsilon > 0, \exists N \in \mathbb{Z}^{+}$ s.t. $\forall m \geq N, \hat{\mu}_{a_1^{R-1}}(m-1) > \mu_1 - 2 \varepsilon, \hat{\mu}_{a_j^{R-1}}(m-1) < \mu_2 + 2 \varepsilon, \forall j \geq 2$. \\
    Repeat the process and we finally get $\forall \varepsilon > 0, \exists N \in \mathbb{Z}^{+}$ s.t. $\forall 1 \leq k \leq R, \forall m \geq N, \hat{\mu}_{a_1^{k}}(m-1) > \mu_1 - (R-k+1) \varepsilon, \hat{\mu}_{a_j^{k}}(m-1) < \mu_2 + (R-k+1) \varepsilon, \forall j \geq 2$. \\
    
    From above argument and Lemma \ref{lm:ucb_ul}, we have $\forall \varepsilon > 0, \exists N \in \mathbb{Z}^{+}$ s.t. $\forall 1 \leq k \leq R, \forall m \geq N$, $\hat{\mu}_{a_1^{k}}(m-1) > \mu_1 - (R-k+1) \varepsilon$, $\hat{\mu}_{a_j^{k}}(m-1) < \mu_2 + (R-k+1) \varepsilon$, $T_j^B(m-1) < \frac{\beta}{2\left(\Delta_2 - (2R+1)\varepsilon\right)^2} \ln m, \forall j \geq 2$. \\
    So $\sum_{j=2}^{L_1} T_j^B(m-1) < \frac{(L_1-1)\beta}{2\left(\Delta_2 - (2R+1)\varepsilon\right)^2} \ln m$. $\forall i \geq 2, T_i^{a_1^1}(m-1) < \frac{\alpha_1^1}{2\left(\Delta_2 - (2R-1)\varepsilon\right)^2} \ln m$. \\
    From Lemma \ref{lm:ucb_ul}, $\sum_{j=2}^{L_1} T_j^B(m-1) \geq \frac{(L_1-1)\beta}{2\left(\Delta_K + \varepsilon\right)^2} \ln m$ when $N$ is large enough. \\
    If $(L_2-1)\frac{\alpha_1^1}{2 \Delta_2^2} < (L_1-1)\frac{\beta}{2 \Delta_K^2}$, from the property of $a_h^1$ with $h \geq 2$ we have $\forall i \geq 2, T_i^2(m-1) \geq \frac{(L_1-1)\beta}{2(L_2-1)\left(\Delta_K + \varepsilon\right)^2} \ln m - 1 > \frac{\alpha_1^1}{2\left(\Delta_2 - (2R-1)\varepsilon\right)^2} \ln m$ when $\varepsilon << 1$. \\
    Then $a_1^1$ will not select $a_i^2$ at round $m$, $\forall i \geq 2$. So $\exists$ constant $C_{\varepsilon}$ s.t.
    $$\sum_{i=2}^{L_2} T_i^1(m-1) < \frac{(L_1-1)\beta}{2\left(\Delta_2 - (2R+1)\varepsilon\right)^2} \ln m + C_{\varepsilon}$$
    If $(L_2-1)\frac{\alpha_1^1}{2 \Delta_2^2} \geq (L_1-1)\frac{\beta}{2 \Delta_K^2}$,
    $$\sum_{i=2}^{L_2} T_i^1(m-1) < \frac{(L_1-1)\beta+(L_2-1)\alpha_1^1}{2\left(\Delta_2 - (2R+1)\varepsilon\right)^2} \ln m + C_{\varepsilon}$$
    From similar inference we get: \\
    If $(L_3-1)\frac{\alpha_1^2}{2 \Delta_2^2} < \max\left\{(L_2-1)\frac{\alpha_1^1}{2 \Delta_K^2}, (L_1-1)\frac{\beta}{2 \Delta_K^2}\right\}$, $\exists$ constant $C_{\varepsilon}^{'}$ s.t.
    $$\sum_{i=2}^{L_3} T_i^2(m-1) < \sum_{i=2}^{L_2} T_i^1(m-1) + C_{\varepsilon}^{'}$$
    Else,
    \begin{align}
    \sum_{i=2}^{L_3} T_i^2(m-1) < &\sum_{i=2}^{L_2} T_i^1(m-1) + \nonumber\\
    &\frac{(L_3-1)\alpha_1^2}{2\left(\Delta_2 - (2R+1)\varepsilon\right)^2} \ln m + C_{\varepsilon}^{'} \nonumber
    \end{align}
    Repeat the process and we finally get ($m >> 1$):
    $$\sum_{i=2}^{L_R} T_i^{R-1}(m-1) < \frac{\sum_{k \in \mathcal{S}_2}(L_k-1)\alpha_1^{k-1}}{2\left(\Delta_2 - (2R+2)\varepsilon\right)^2} \ln m, \alpha_1^{0} = \beta$$
    where
    $$\mathcal{S}_m = \left\{1 \leq k \leq R: (L_k-1)\frac{\alpha_1^{k-1}}{\Delta_m^2} \geq \max\{\right.$$
    $$\left.(L_1-1)\frac{\beta}{\Delta_K^2}, (L_2-1)\frac{\alpha_1^1}{\Delta_K^2}, \cdots, (L_{k-1}-1)\frac{\alpha_1^{k-2}}{\Delta_K^2} \} \right\}$$
    
    Suppose $a_h^R$ $(h \geq 2)$ selects arm $i$ at round $m$, then $\forall l > i, T_{l}(m-1) \geq T_i(m-1)$. \\
    From Lemma \ref{lm:ucb_ul}, when $m >> 1$, $T_i(m-1) \geq \left(\frac{\alpha_1^R}{2\Delta_{i}^{2}} - \varepsilon\right) \ln m$ and $T^{a_1^R}_{l}(m-1) \leq \frac{\alpha_1^R}{2 \Delta_{l}^{2}} \ln m+1, \forall l > i$. So
    $\sum_{l=i+1}^{K} \left(\left(\frac{\alpha_1^R}{2\Delta_{i}^{2}} - \varepsilon \right) \ln m - \frac{\alpha_1^R}{2 \Delta_{l}^{2}} \ln m -1\right) \leq \sum_{j=2}^{L_R} T_j^{R-1}(m-1)$
    $$\Rightarrow (K-i)\frac{\alpha_1^R}{\Delta_{i}^{2}} - \sum_{l=i+1}^{K}\frac{\alpha_1^R}{\Delta_{l}^{2}} \leq \frac{\sum_{k \in \mathcal{S}_2}(L_k-1)\alpha_1^{k-1}}{\Delta_2^2} + O(\varepsilon)$$
    Let $$i_m = \min \left\{2 \leq i \leq K: (K-i)\frac{\alpha_1^R}{\Delta_{i}^{2}} - \sum_{l=i+1}^{K}\frac{\alpha_1^R}{\Delta_{l}^{2}} \leq \right.$$
    $$\left.\frac{\sum_{k \in \mathcal{S}_m}(L_k-1)\alpha_1^{k-1}}{\Delta_m^2} \right\}$$
    Then $i \geq i_2$. \\
    If $i_2 > 2$, we have $\forall j \geq 2, a_j^R$ will never select arm $1,2$ after round $N$. \\
    Then $\forall \varepsilon > 0, \exists N_2 \in \mathbb{Z}^{+}$ s.t. $\forall 1 \leq k \leq R, \forall m \geq N_2, \hat{\mu}_{a_1^{k}}(m-1) > \mu_1 - (R-k+1) \varepsilon, \hat{\mu}_{a_j^{k}}(m-1) < \mu_3 + (R-k+1) \varepsilon, T_j^B(m-1) < \frac{\beta}{2\left(\Delta_3 - (2R+1)\varepsilon\right)^2} \ln m, \forall j \geq 2$. From similar inference we can get that if $i_3 > 3$, $\forall j \geq 2, a_j^R$ will never select arm $1,2,3$ after round $N_2$. \\
    Repeat this process and we finally get $N^{*}$ s.t. $\forall j \geq 2, a_j^R$ will only select arm $i^{*}, \cdots, K$ after round $N^*$ with
    $$i^* = \min \left\{2 \leq i \leq K: (K-i)\frac{\alpha_1^R}{\Delta_{i}^{2}} - \sum_{l=i+1}^{K}\frac{\alpha_1^R}{\Delta_{l}^{2}} \leq \right.$$
    $$\left.\frac{\sum_{k \in \mathcal{S}_i}(L_k-1)\alpha_1^{k-1}}{\Delta_i^2} \right\}$$ \\
    
    Now we remove the assumption that $\forall 1 \leq k \leq R, \forall j \geq 2, a_j^k$ always selects the arm with the least number of pulls. \\
    From above argument we know $\forall \varepsilon > 0, \forall D > 0, \exists$ constant $M_{\varepsilon}$ s.t. when $\alpha_j^k > M_{\varepsilon}, \forall 2 \leq j \leq L_k, 1 \leq k \leq R$, $\exists N > D$ s.t. $\forall 1 \leq k \leq R, \hat{\mu}_{a_1^{k}}(N-1) > \mu_1 - (R-k+1) \varepsilon, \forall j \geq 2, \hat{\mu}_{a_j^{k}}(N-1) < \mu_{i^*} + (R-k+1) \varepsilon$ and $$\sum_{i=2}^{L_R} T_i^{R-1}(N-1) < \frac{\sum_{k \in \mathcal{S}_{i^*}}(L_k-1)\alpha_1^{k-1}}{2\left(\Delta_{i^*} - (2R+2) \varepsilon\right)^2} \ln N$$
    Suppose some $a_h^R$ with $h \geq 2$ is pulled for the first time after round $N-1$ at round $m_1 \geq N$. \\
    If $a_h^R$ selects arm $i < i^*$ at round $m_1$, then $\forall l \geq i^*$, when $N >> 1$,
    \begin{align}
    \mu_l + \sqrt{\frac{\alpha_h^R \ln m_1}{2 T_l(m_1-1)}} &\leq \mu_i + \sqrt{\frac{\alpha_h^R \ln m_1}{2 T_i(m_1-1)}} \nonumber\\
    &\leq \mu_i + (\Delta_i + \varepsilon) \sqrt{\frac{\alpha_h^R}{\alpha_1^R}} \nonumber
    \end{align}
    When $\alpha_h^R >> 1$, we have
    $$\frac{\ln m_1}{T_l(m_1-1)} \leq \frac{2(\Delta_i + 2\varepsilon)^2}{\alpha_1^R}$$
    $$\Rightarrow T_l(m_1-1) \geq \frac{\alpha_1^R}{2 (\Delta_i + 2\varepsilon)^2} \ln m_1$$
    When $\varepsilon << 1$, from the definition of $i^*$,
    $$\sum_{j=2}^{L_R} T_{j}^{R-1}(m_{1}-1) \geq$$
    $$\sum_{l=i^*}^{K} \left(\frac{\alpha_1^R}{2 (\Delta_i + 2\varepsilon)^2} \ln m_1 - \frac{\alpha_{1}^R}{2 \Delta_{l}^{2}} \ln m_1 -1 \right) >$$
    $$\frac{\sum_{k \in \mathcal{S}_{i^*}}(L_k-1)\alpha_1^{k-1}}{2\left(\Delta_{i^*} - (2R+2) \varepsilon\right)^2} \ln m_1$$
    It is a contradiction. \\
    So after round $N$, $\forall j \geq 2, a_j^R$ will never selects arm $i < i^*$. \\
    
    \noindent If $i^{*} > 2$, because $T_{K}(n-1) \leq T_{K-1}(n-1)+1 \leq \cdots \leq T_{i^{*}-1}(n-1)+K-i^{*}+1$,
    $$T_{l}(n-1) \leq \frac{\alpha_{1}^R}{2 \Delta_{i^{*}-1}^{2}} \ln n+K-i^{*}, \forall l \geq i^{*}$$
    Therefore, $\forall \varepsilon > 0$, $\exists$ constant $M_{\varepsilon}$, if $\alpha_j^k > M_{\varepsilon}, \forall 2 \leq j \leq L_k, 1 \leq k \leq R$, then $\exists$ constant $C_{\varepsilon}$ s.t. $\forall n > 0$ we have \\
    If $i^{*} > 2$,
    $$R_n^* \leq \left(\frac{\alpha_1^R}{2 \Delta_{i^{*}-1}^2} \left(\sum_{l=i^{*}}^{K} \Delta_l\right) + \sum_{i=2}^{i^{*}-1} \frac{\alpha_1^R}{2 \Delta_i} + \varepsilon \right) \ln n + C_{\varepsilon}$$
    Else,
    $$R_{n}^{*} \leq\left(\frac{\sum_{k \in \mathcal{S}_{2}}(L_k-1)\alpha_1^{k-1}}{2\Delta_{2}^2/\Delta_{K}} +\sum_{i=2}^{K} \frac{\alpha_{1}^R}{2 \Delta_{i}}+\varepsilon\right) \ln n + C_{\varepsilon}$$
\end{proof}

Finally, we use the result of Theorem \ref{thm:122} to prove Theorem \ref{thm:final}. The technique that using deterministic case to prove general case is learned from the proof of Theorem 1 in \cite{uct2}.

\begin{proof}
    Let $\delta_{n}=\frac{\delta}{2 K n(n+1)}, c_{n}=\sqrt{\frac{\ln \left(\delta_{n}^{-1}\right)}{2 n}}, \hat{\mu}_{i, n_{i}} = \frac{1}{n_i}\sum_{m=1}^{n_i} X_{i,m}$.
    Using Hoeffding's inequality (Fact \ref{fact:hoe}), $\forall 1 \leq i \leq K$,
    \begin{align}
    \mathbb{P}\left(\left|\sum_{m=1}^{n} X_{i, m}-n \mu_{i}\right| \geq n c_{n}\right) &\leq 2 \exp \left(-2 n c_{n}^{2}\right) \nonumber\\
    &=\frac{\delta}{K n(n+1)} \nonumber
    \end{align}
    Then
    \begin{align}
    \mathbb{P}\left(\exists n_{i} \geq 1,\left|\hat{\mu}_{i, n_{i}}-\mu_{i}\right| \geq c_{n_{i}}\right) &\leq \sum_{n_{i}=1}^{\infty} \frac{\delta}{K n_{i}\left(n_{i}+1\right)} \nonumber\\
    &=\frac{\delta}{K} \nonumber
    \end{align}
    $$\Rightarrow \sum_{i=1}^{K} \mathbb{P}\left(\exists n_{i} \geq 1,\left|\hat{\mu}_{i, n_{i}}-\mu_{i}\right| \geq c_{n_{i}}\right) \leq \delta$$
    So with probability $1-\delta$ we have $\forall m \geq 1, \forall 1 \leq i \leq K$, $\left|\hat{\mu}_{i, T_{i}(m)}-\mu_{i}\right|<c_{T_{i}(m)}$. \\
    $\forall \varepsilon>0, \exists N_{\varepsilon, \delta} \in \mathbb{Z}^{+}$s.t. $\forall n \geq N_{\varepsilon, \delta}$, $c_{n}<\varepsilon$. \\
    We state that $\forall D>0, \exists n>D$, The arm pulled at round $n$ has the most pulls before round $n$. \\
    In fact, $\exists n>D$ s.t. $\underset{1 \leq i \leq K}{\max} T_{i}(n-1) \neq \underset{1 \leq i \leq K}{\max} T_{i}(n)$, because $\underset{1 \leq i \leq K}{\max} T_{i}(K\lceil D\rceil+1) > \underset{1 \leq i \leq K}{\max} T_{i}(\lceil D\rceil)$. \\
    Then this $n$ satisfies the condition. \\
    Suppose arm $i$ is pulled by $a_j^R$ at round $n$, then $T_{i}(n-1) \geq \frac{n-1}{K}$. \\
    $\forall i' \neq i$, 
    $$\hat{\mu}_{i, T_{i}(n-1)}+\sqrt{\frac{\alpha_{j}^{R} \ln n}{2 T_{i}(n-1)}} \geq \hat{\mu}_{i', T_{i'}(n-1)}+\sqrt{\frac{\alpha_{j}^{R} \ln n}{2 T_{i'}(n-1)}}$$
    $$\Rightarrow \sqrt{\frac{\alpha_{j}^{R} \ln n}{2 T_{i'}(n-1)}} \leq 1+\sqrt{\frac{\alpha_{j}^{R} K \ln n}{2(n-1)}}$$
    $$\Rightarrow \alpha_{j}^{R} \ln n\left(1+\sqrt{\frac{\alpha_{j}^{R} K \ln n}{2(n-1)}}\right)^{-2} \leq 2 T_{i'}(n-1)$$
    $$\Rightarrow T_{i'}(n-1) \geq \frac{\alpha_{j}^{R}}{4} \ln n > \frac{1}{2} \ln n \quad \text { when } n>>1$$
    So $\exists C_{\varepsilon, \delta}>0,  \forall n>C_{\varepsilon, \delta}$ satisfies $\forall 1 \leq i \leq K$, $T_{i}(n-1)>N_{\varepsilon, \delta}$. \\
    So $\forall \varepsilon > 0,  \exists C_{\varepsilon, \delta}>0$ s.t. $\forall 1 \leq i \leq K, \left| \hat{\mu}_{i, T_{i}(n)}-\mu_{i} \right|<\varepsilon$ if $n>C_{\varepsilon, \delta}$. \\
    Then from the result of Theorem \ref{thm:122} we get the conclusion.
\end{proof}

\section{Experiment Details of Section \ref{sc:exp}}
\label{app:exp}

\subsection{Regret change after increasing parameters of UCB strategies}
\label{app:exp1}
We use $U(a,b)$ to denote a uniform distribution over $[a,b]$. To randomly generate an arm set, we generate $K \sim U(2,30)$, then $p_1, p_2, \ldots, p_K \stackrel{i.i.d}{\sim} U(0,1)$. To randomly generate a hierarchical UCB structure, we generate the number of layers $R \sim U(1,9)$, the parameter of top stragy $\beta \sim U(2,10)$, the number of experts at each layer $L_k \sim U(2,10), \forall 1 \leq k \leq R$, and the parameter of each expert $\alpha_j^k \sim U(2,10), \forall 1 \leq j \leq L_k, \forall 1 \leq k \leq R$.

Then we select a kind of distribution. For Beta distribution, we generate $a_i \sim U(2,100), b_i \sim U(2,100), \forall 1 \leq i \leq K$. For Binomial distribution, we generate $n_i \sim Discrete\_Uniform(2, 3, \cdots, 30), \forall 1 \leq i \leq K$. Then the rule of arms will be: deterministic: $X_i = p_i$. Bernoulli: $X_i \sim Bernoulli(p_i)$. Beta: $X_i \sim Beta(a_i, b_i)$. Binomial: $X_i \sim \frac{1}{n_i} Binomial(n_i, p_i)$. $\forall 1 \leq i \leq K$.

In each process, we choose a distribution and randomly generate the arm set and hierarchical UCB structure. Then we set the number of total rounds $T = 10000$ and let the experts start to choose. For the given arm set and hierarchical UCB structure, we repeat the procedure for $N=100$ times and record the average total regret at round $T$ (For deterministic, $N=1$), it will be the estimation of $R_1$. Then we finish the first half of this process.

Next, we maintain the arm set and change the parameters in hierarchical UCB structure: $\alpha_j^k \leftarrow 1000000$ if $\alpha_j^k \neq \underset{1 \leq l \leq L_k}{\min} \{\alpha_l^k\}, \forall 1 \leq j \leq L_k, \forall 1 \leq k \leq R$. Then we repeat the procedure for $N=100$ times and record the average total regret at round $T$, it will be the estimation of $R_2$. Then we finish this process.

The parameters of Figure \ref{fig:a} - \ref{fig:f} are recorded in Table \ref{tb:figure1-6}.

\begin{table}[H]
    \centering
    \text{Figure \ref{fig:a}} \\
    \begin{tabular}{|c|c|c|c|}
        \hline
        Value of $i$ & 1 & 2 & 3 \\
        \hline
        $\beta$ & 3.41 & & \\
        \hline
        $\alpha_i^1$ & 6.45 & 7.17 & 4.29 \\
        \hline
        $\mu_i$ & 0.12 & 0.51 & 0.72 \\
        \hline
    \end{tabular}
\end{table}

\begin{table}[H]
    \centering
    \text{Figure \ref{fig:b}} \\
    \begin{tabular}{|c|c|c|c|c|c|}
        \hline
        Value of $i$ & 1 & 2 & 3 & 4 & 5 \\
        \hline
        $\beta$ & 2.18 & & & & \\
        \hline
        $\alpha_i^1$ & 5.08 & 4.90 & 3.80 & 6.72 & \\
        \hline
        $\alpha_i^2$ & 8.86 & 9.73 & 9.33 & & \\
        \hline
        $\alpha_i^3$ & 9.13 & 6.93 & 7.45 & 2.87 & 6.07 \\
        \hline
        $\mu_i$ & 0.09 & 0.35 & 0.26 & 0.44 & \\
        \hline
    \end{tabular}
\end{table}

\begin{table}[H]
    \centering
    \text{Figure \ref{fig:c}} \\
    \begin{tabular}{|c|c|c|c|c|c|}
        \hline
        Value of $i$ & 1 & 2 & 3 & 4 & 5 \\
        \hline
        $\beta$ & 6.44 & & & & \\
        \hline
        $\alpha_i^1$ & 2.88 & 9.22 & 9.25 & 6.72 & \\
        \hline
        $\alpha_i^2$ & 2.48 & 6.60 & & & \\
        \hline
        $\alpha_i^3$ & 5.06 & 3.50 & 9.86 & 6.00 & 3.78 \\
        \hline
        $a_i$ & 46.97 & 84.75 & & & \\
        \hline
        $b_i$ & 8.71 & 68.52 & & & \\
        \hline
    \end{tabular}
\end{table}

\begin{table}[H]
    \centering
    \text{Figure \ref{fig:d}} \\
    \begin{tabular}{|c|c|c|c|c|c|}
        \hline
        Value of $i$ & 1 & 2 & 3 & 4 & 5 \\
        \hline
        $\beta$ & 6.85 & & & & \\
        \hline
        $\alpha_i^1$ & 2.73 & 3.11 & 7.42 & 5.24 & \\
        \hline
        $\alpha_i^2$ & 6.84 & 5.14 & & & \\
        \hline
        $n_i$ & 7 & 19 & 9 & 15 & 25 \\
        \hline
        $\mu_i$ & 0.96 & 0.31 & 0.20 & 0.23 & 0.32 \\
        \hline
    \end{tabular}
\end{table}

\begin{table}[H]
    \centering
    \text{Figure \ref{fig:e}} \\
    \begin{tabular}{|c|c|c|}
        \hline
        Value of $i$ & 1 & 2 \\
        \hline
        $\beta$ & 6.51 & \\
        \hline
        $\alpha_i^1$ & 5.69 & 7.08 \\
        \hline
        $\mu_i$ & 0.36 & 0.70 \\
        \hline
    \end{tabular}
\end{table}

\begin{table}[H]
    \centering
    \text{Figure \ref{fig:f}} \\
    \begin{tabular}{|c|c|c|c|c|c|}
        \hline
        Value of $i$ & 1 & 2 & 3 & 4 & 5 \\
        \hline
        $\beta$ & 7.60 & & & & \\
        \hline
        $\alpha_i^1$ & 2.61 & 9.72 & & & \\
        \hline
        $\alpha_i^2$ & 2.51 & 8.13 & 3.37 & 7.31 & 7.69 \\
        \hline
        $\mu_i$ & 0.79 & 0.42 & 0.01 & 0.72 & \\
        \hline
    \end{tabular}
    \caption{Parameters of Figure \ref{fig:a} - \ref{fig:f}} \label{tb:figure1-6}
\end{table}

\subsection{Selection ranges of different experts in hierarchical UCB structure}
In this experiment, we use deterministic distribution and randomly run processes in the same way as the first half of one process in Experiment \ref{app:exp1}.

\subsection{Reasonable number of experts}
In this experiment, we first randomly generate an arm set using the way in Experiment \ref{app:exp1}. Then we assume a prior distribution of the parameters in hierarchical UCB structure: $U(2,10)$. For $K \leq 500$, we calculate $M_i$ and $N_i$ for $1 \leq i \leq 100$.

To calculate $M_i$, we firstly calculate $\mathbb{E}[\beta] = 6, \mathbb{E}[\alpha_1^1] = \frac{8}{i+1} + 2$. Here $\mathbb{E}[\alpha_1^1]$ is the expectation of the minimum of $i$ i.i.d random variables which obey $U(2,10)$. Then we can calculate the expectation of $i^*$ and the bound in Theorem \ref{thm:final}.

To calculate $N_i$, we fix $L_1 = i$ and randomly generate $N=100$ hierarchical UCB structures using the way in Experiment \ref{app:exp1}. For each hierarchical UCB structure, we use the same method and parameters (Distribution of arms is assumed as deterministic) as Experiment \ref{app:exp1} to estimate the total regret. Then the average of these $N$ total regrets is the estimation of $N_i$.

\end{document}